\documentclass[12pt,draftcls,onecolumn]{IEEEtran}

\ifCLASSINFOpdf
  \usepackage[pdftex]{graphicx}
\else
\fi
%
%

%
\usepackage[cmex10]{amsmath}
%
\interdisplaylinepenalty=2500
\usepackage[tight,footnotesize]{subfigure}

\usepackage{amsfonts,amsmath,amsthm,amssymb,graphicx,epstopdf,cite,subfigure}
\usepackage{amsfonts,amsmath,amsthm,amssymb,graphicx,epstopdf,cite,subfigure}
\usepackage{psfrag,amssymb,amsmath,pifont,cite,graphics,graphicx,epsfig,subfigure,amscd,helvet,multirow,enumerate}
\usepackage{color}
\usepackage{epstopdf}
\usepackage{accents}

\usepackage{amssymb}
\usepackage{mathtools}
\usepackage{tikz}
\usetikzlibrary{shapes.symbols,patterns}
\usetikzlibrary{shapes,shapes.multipart,shapes.geometric}
\usetikzlibrary{arrows.meta}

\usepackage{multirow}

%
%

\usepackage[linesnumbered,ruled,norelsize]{algorithm2e}
	\makeatletter
	\newcommand{\removelatexerror}{\let\@latex@error\@gobble}
	\makeatother

\usepackage{booktabs}
\usepackage{pgfplots}
\usepackage{epstopdf}
\usepackage{scalefnt}
\usepackage{adjustbox}
\usepackage{ifthen}
\usetikzlibrary{shapes,shapes.geometric}
\usetikzlibrary{calc,backgrounds}

\usepackage{amssymb}
\usepackage{mathtools}
\usepackage{tikz}
\usepackage{tikz-3dplot}
\usetikzlibrary{shapes.symbols,patterns}
\usetikzlibrary{shapes,shapes.multipart,shapes.geometric}
\usetikzlibrary{arrows.meta}
\usetikzlibrary{shapes,shapes.geometric}
\usetikzlibrary{calc,backgrounds}

\usepackage{multirow}

\usepackage{hhline}

\newcommand{\mathbff}[3]{\mathbf{#1}_{#2}^{(#3)}}

\newcommand\Tstrut{\rule{0pt}{4.4ex}}       
\newcommand\Bstrut{\rule[-1.35ex]{0pt}{0pt}} 
\newcommand{\TBstrut}{\Tstrut\Bstrut} 

\usepackage{amsthm}

\usepackage{url} 
\usepackage{colortbl}
\usepackage{hhline}
\newtheorem{theorem}{Theorem}[section]
\newtheorem{lemma}[theorem]{Lemma}

\newtheorem{corollary}[theorem]{Corollary}

\newtheorem{remark}[theorem]{Remark}
\newenvironment{prooff}{\textit{Proof of Theorem \ref{fl:thm002}:}}{\hfill$\square$}

\usepackage{color,soul}
\soulregister\cite7
\soulregister\ref7
\soulregister\eqref7

\begin{document}

\newcommand{\matc}[2][ccccccccccccccccccc]{\left[
\begin{array}{#1}
#2\\
\end{array}
\right]}
\newcommand{\matr}[2][rrrrrrrrrrrrrrrrrrrrrrrr]{\left[
\begin{array}{#1}
#2\\
\end{array}
\right]}
\newcommand{\matl}[2][lllllllllllllllllll]{\left[
\begin{array}{#1}
#2\\
\end{array}
\right]}

\newcommand{\pt}[2]{P_{T_{\mathbf{Y}_{#1}}\widetilde{\mathcal{Y}}}(#2)}

\newcommand*\circled[1]{\tikz[baseline=(char.base)]{
  \node[shape=circle,draw,inner sep=2pt] (char) {#1};}}


{\Large \textbf{Notice:} This work has been submitted to the IEEE for possible publication. Copyright may be transferred without notice, after which this version may no longer be accessible.}

\clearpage


%
\title{Gradual Federated Learning with Simulated Annealing}

\author{\IEEEauthorblockN{Luong Trung Nguyen, Junhan Kim, and Byonghyo Shim}

\IEEEauthorblockA{Information System Laboratory\\
Department of Electrical and Computer Engineering, Seoul National University\\
Email: \{ltnguyen,junhankim,bshim\}@islab.snu.ac.kr}
\thanks{A part of this paper was presented at the International Conference on Acoustics, Speech, and Signal Processing (ICASSP), 2021~\cite{localization:luongICASSP}.

This work was supported by Samsung Research Funding \& Incubation Center for Future Technology of Samsung Electronics under Project Number SRFC-IT1901-17.}
}


%


\maketitle


\begin{abstract}
    Federated averaging (FedAvg) is a popular federated learning (FL) technique that updates the global model by averaging local models and then transmits the updated global model to devices for their local model update. One main limitation of FedAvg is that the average-based global model is not necessarily better than local models in the early stage of the training process so that FedAvg might diverge in realistic scenarios, especially when the data is non-identically distributed across devices and the number of data samples varies significantly from device to device. In this paper, we propose a new FL technique based on simulated annealing. The key idea of the proposed technique, henceforth referred to as \textit{simulated annealing-based FL} (SAFL), is to allow a device to choose its local model when the global model is immature. Specifically, by exploiting the simulated annealing strategy, we make each device choose its local model with high probability in early iterations when the global model is immature. From extensive numerical experiments using various benchmark datasets, we demonstrate that SAFL outperforms the conventional FedAvg technique in terms of the convergence speed and the classification accuracy.
\end{abstract}



%
\IEEEpeerreviewmaketitle

\section{Introduction}


Federated learning (FL) is an emerging distributed learning technique where hundreds or thousands of devices jointly train a common machine learning (ML) model without exchanging their local dataset with the centralized server or other devices~\cite{mcmahan2017,sahu2020,semiari2019,nguyen2019tut,samarakoon2019}.
A wide range of FL applications include human face recognition, next-word prediction, resource allocation, device tracking, basestation association, cyberattack detection, to name just a few~\cite{yang2019,wang2019,sattler2019,wanghan2019}.
In the FL-based approach, a learning task is performed in an iterative fashion, mainly following by three steps (see Fig. \ref{fl:fig001}). First, a server sets up a common ML model and then broadcasts the model to the user devices. Second, user devices train the model locally and individually using their own local datasets. Third, the server evaluates the model by aggregating the locally trained parameters sent by the devices.

The central challenge of FL is to improve the learning capability of user devices without sharing their own datasets with other devices. In fact, due to various reasons such as user privacy and limited resources (e.g., computing hardware, battery power, network capacity, bandwidth), data generated in one device cannot be transmitted to the server or other devices. One well-known approach to deal with this issue is federated averaging (FedAvg)~\cite{mcmahan2017}. In this technique, instead of transmitting data, each device transmits locally trained parameters (e.g., gradients or updated model parameters) to the server. The server updates the global model by averaging the local parameters and then sends the updated model back to devices for the local model update.

\begin{figure}[t]
\centering
\begin{adjustbox}{max width=0.7\textwidth, max totalheight=\textheight,keepaspectratio}
\begin{tikzpicture}
\node (c0) {\includegraphics[scale=1]{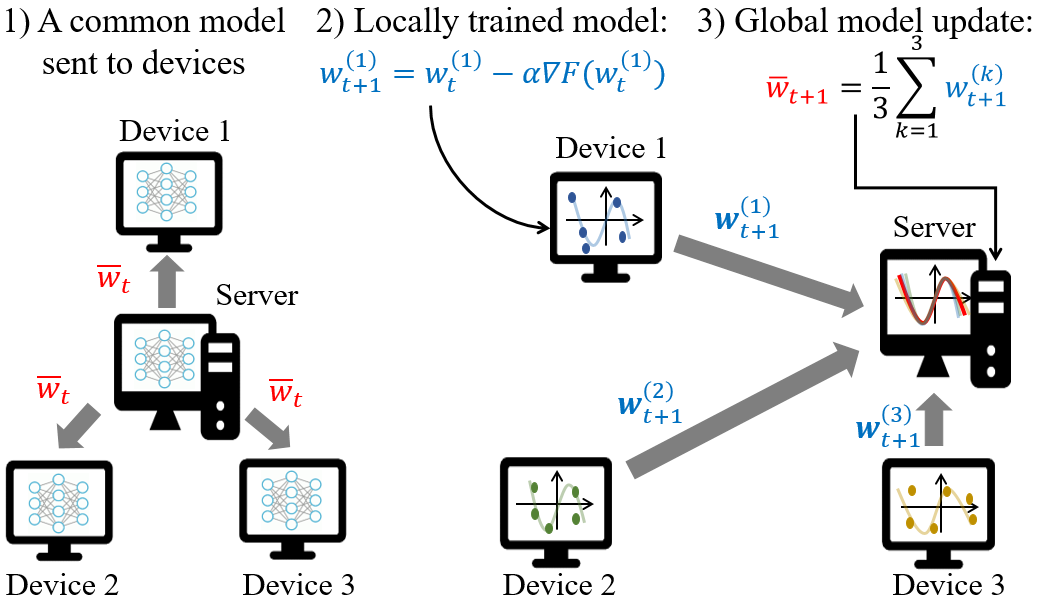}};
\node[below of=c0, xshift=-11cm, yshift=-8.5cm] (c1) {\Huge (a)};
\node[right of=c1,xshift=17cm] (c2) {\Huge (b)};
\end{tikzpicture}
\end{adjustbox}
\caption{Federated learning operation: (a) centralized server sends a common model to user devices and (b) each device locally trains the model using its own local data and then upload the trained network parameters to the server to globally update the model.}
\label{fl:fig001}
\end{figure}

While FedAvg is effective in solving nonconvex problem, it has been shown that FedAvg and its variants might diverge in realistic scenarios where the data is non-identically distributed across devices (e.g., data of different languages in the next-word prediction application) and/or the number of data samples significantly varies from device to device~\cite{sahu2020, li2020federated}.
One important reason for the divergence of FedAvg is that the average-based global model is not necessarily better than locally trained models so that just relying on the global model might degrade the entire learning process~\cite{yuchen2012,yossi2015,martin2010}. To illustrate this, we consider a simple FL task whose goal is to minimize the cost function given by
\begin{equation}
J(\mathbf{w}; \mathcal{D}) = \sum\limits_{(\mathbf{x}_i,y_i)\in\mathcal{D}}(y_i-\mathbf{x}_i^T\mathbf{w})^2 + \|\mathbf{w}\|_1,
\label{fl:eq035}
\end{equation}
where $\mathcal{D} = \{(\mathbf{x}_i, y_i)\}_i$ is the training dataset and $\|\mathbf{w}\|_1 = |w_1| + |w_2|$ is the $\ell_1$-norm of $\mathbf{w}$. For simplicity, we consider two devices with the local datasets $\mathcal{D}_1 = \{( [\frac{1}{4} \ 0]^T, -1)\}$ and $\mathcal{D}_2 =\{([0 \ \frac{3}{2}]^T, 1)\}$. One can easily check that the parameters $\mathbf{w}$ minimizing the cost function $J(\mathbf{w}; \mathcal{D})$ with respect to $\mathcal{D}_1$ and $\mathcal{D}_2$ are $\mathbf{w}^{(1)} = [0 \ 0]^T$ and $\mathbf{w}^{(2)} = [0 \ \frac{4}{9}]^T$, respectively (see Appendix \ref{fl:apxF}). Using $\mathbf{w}^{(1)}$ and $\mathbf{w}^{(2)}$, we obtain the average-based model $\overline{\mathbf{w}}$ evaluated at the server: $\overline{\mathbf{w}} = \frac{1}{2}(\mathbf{w}^{(1)} + \mathbf{w}^{(2)}) = [0 \ \frac{2}{9}]^T$. Since the optimum weight over $\mathcal{D}_{1} \cup \mathcal{D}_{2}$ is $\mathbf{w}_\ast =  [0 \ \frac{4}{9}]^T$, we have
\begin{align*}
    \|\mathbf{w}_\ast - \overline{\mathbf{w}}\|_2
    = \frac{2}{9}
    > 0
    = \|\mathbf{w}_\ast - \mathbf{w}^{(2)}\|_2,
\end{align*}
which implies that the average value $\overline{\mathbf{w}}$ is worse than the locally generated value $\mathbf{w}^{(2)}$. In this scenario, clearly, it would be better for the second device to use its own solution $\mathbf{w}^{(2)}$ instead of the server feedback $\overline{\mathbf{w}}$. Simply put, the moral of the story is that collaboration might do more harm than good, especially when things are not ready.

\begin{figure}[t]
\centering
\begin{adjustbox}{max width=0.6\textwidth,max totalheight=\textheight,keepaspectratio}
\begin{tikzpicture}

\coordinate (x0) at (0,5);
\coordinate (x1) at (1,4.5);
\coordinate (x2) at (2,5.5);

\coordinate (x3) at (3,3);
\coordinate (x4) at (5.25,4);

\coordinate (x5) at (6.5,1);
\coordinate (x6) at (8,4.5);
\coordinate (o) at (0,0);
\coordinate (x) at (9,0);
\coordinate (y) at (0,6.5);

\coordinate (b1) at (4.3,3.7);
\coordinate (b2) at (3.6,3.2);
\coordinate (b3) at (5.1,4.05);

\coordinate (e1) at (4,6.5);
\coordinate (e2) at (4,6);
\coordinate (e3) at (4,5.5);

\draw [<->, very thick] (y) -- node [rotate=90, above, yshift=0.15cm] {Aggregated cost function $F(\mathbf{w})$} (o) -> node [below, yshift=-0.15cm] {Model parameter $\mathbf{w}$} (x);

\filldraw[color = black!30] (b1) circle  (0.15);
\filldraw [color = black!70] (b2) circle (0.15);
\filldraw [color = black!50] (b3) circle (0.15);

\filldraw[color = black!30] (e1) circle  (0.15) node[right, xshift=0.2cm] {\small \textcolor{black}{Current model of user device}};
\filldraw [color = black!70] (e2) circle (0.15) node[right, xshift=0.2cm] {\small \textcolor{black}{Sever feedback (accepted with prob. $1-p$)}};
\filldraw [color = black!50] (e3) circle (0.15) node[right, xshift=0.2cm] {\small \textcolor{black}{Perturbation update (accepted with prob. $p$)}};

\begin{scope}[on background layer]
\draw [very thick, color=blue!60!black] plot [smooth] coordinates {(x0) (x1) (x2) (x3) (x4) (x5) (x6)};
\end{scope}

\node[below of=x5,yshift=0.7cm] (d1) {\small Global minimum};
\node[below of=x3,yshift=0cm,xshift=-1cm] (d2) {\small Local minimum trap};

\draw[->,thick] (d2.north) to [out=90,in=-90] ([yshift=-0.1cm]x3);
\draw[->,thick] ([yshift=0.1cm]b1.north) to [out=180,in=90] node[above,pos=.9,yshift=0.2cm] {$1-p$}  ([yshift=0.2cm]b2.north);
\draw[->,thick] ([yshift=0.1cm]b1.north) to [out=90,in=180] node[above,pos=.6] {$p$}  ([yshift=0.1cm,xshift=-0.15cm]b3.west);

\end{tikzpicture}
\end{adjustbox}
\caption{The SA-based update strategy of the proposed SAFL. Here, the local model selection probability $p$ is set to exponentially decay with respect to the training iteration. }
\label{fl:fig100}
\end{figure}
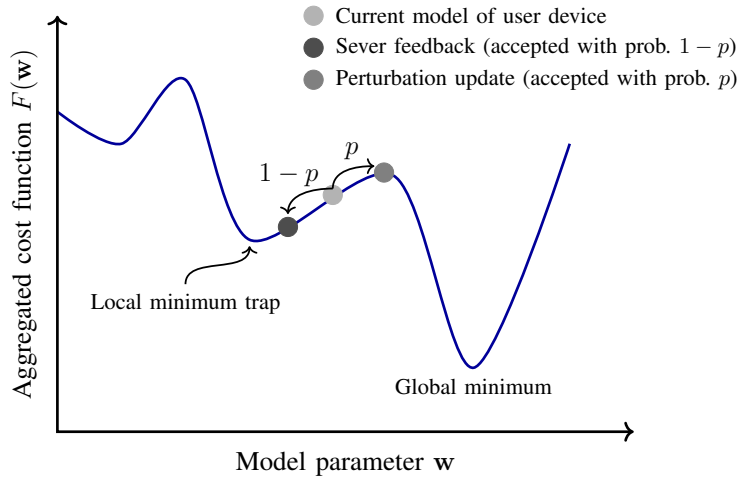


Our intent in this paper is to put forth a simple yet effective FL strategy overcoming the problem we mentioned.
Key idea of the proposed approach, referred to as the simulated annealing-based FL (SAFL), is that we encourage each device to stay with its locally trained model instead of relying on the collaborative learning model in the early stage of the learning process.   
When the collaborative model becomes mature and reliable after the reasonable number of iterations, we use the server-generated model to update the device.
This idea can be well explained using the simulated annealing (SA) strategy.
In the SA strategy, the solution space is searched by imposing perturbations on the estimates of parameters~\cite{kirkpatrick1983,eglese1990, locatelli2000, chenzhu2011, handong2019}.
In the early stage (a.k.a., \textit{heating stage}), the SA algorithm decides to move the system to a new (presumably perturbation) state with high probability, even though the new state might not be better than the current state, to avoid the chance of trapping in the local optima.
In the later stage (a.k.a., \textit{cooling stage}), the SA algorithm reduces the exploration of the perturbation space. 

Inspired by the SA strategy, the proposed SAFL updates the local model of each user probabilistically. To be specific, SAFL decides whether the device keeps its own locally updated model with some modification (i.e., perturbation update) or uses the global evaluation model provided by the server (i.e., server feedback) (see Fig. \ref{fl:fig100}).
In the early iterations where the global model is immature, we give a favor to the locally updated model by setting the local model selection probability high.
As the number of iterations increases, we gradually reduce this probability so that the device relies more on the server feedback, which helps to avoid the overfitting to the local dataset. 

The main contributions of this paper are summarized as follows:
\begin{itemize}
    \item We propose a new FL technique called SAFL (Section~\ref{fl:sec001}) inspired by the SA technique. From extensive numerical experiments on various datasets including MNIST, Fashion-MNIST, CIFAR-10, and Google speech commands, we demonstrate that the proposed SAFL technique is effective and in fact outperforms the conventional FedAvg technique by a large margin in terms of accuracy and convergence speed (Section~\ref{fl:sec002}). Specifically, in the MNIST dataset, SAFL converges two times faster than FedAvg and also achieves more than $50\%$ improvement in the classification accuracy.
   
    \item We analyze the performance of the proposed SAFL technique (Section~\ref{sec:convergence analysis}). Specifically, we show that under some suitable conditions, the mean squares error (MSE) of SAFL satisfies
    \begin{align*}
        E[\| \widehat{\mathbf{w}} - \mathbf{w}_{*} \|_{2}^{2}]
        &\le \xi
    \end{align*}
    after $\mathcal{O}(\frac{1}{\xi})$ iterations where $\widehat{\mathbf{w}}$ is the evaluated model parameters and $\mathbf{w}_{*}$ is the optimal model parameters (see Theorem~\ref{fl:thm002}).
   
   

\item We extend SAFL to the scenario where the performance of the average-based global model is degraded due to non-i.i.d. data and data imbalance among devices (Section~\ref{sec:Extension of SAFL_Outliers}). Our key idea is to detect biased local updates by measuring the performance gap between the global and local models. Specifically, if the performance gap is large, then we consider the local update as a biased update and do not upload it to the server. In doing so, we can exclude the biased local update in the update of the global model and prevent the performance degradation of the global model. From the numerical results, we demonstrate that the extended SAFL is effective in handling the non-i.i.d. data and reducing the number of local updates uploaded to the server (see Section~\ref{fl:sec002}). 
\end{itemize}

We briefly summarize notations used in this paper.
For a vector $\mathbf{a} \in \mathbb{R}^{n}$, $\text{Diag}(\mathbf{a}) \in \mathbb{R}^{n \times n}$ is the diagonal matrix formed by $\mathbf{a}$.
$\| \mathbf{a} \|_2$ stands for the spectral norm (i.e., the largest singular value) of $\mathbf{a}$.
The inner product of two vectors $\mathbf{a}$ and $\mathbf{b}$ is defined as $<\mathbf{a},\mathbf{b}> = \mathbf{a}^T\mathbf{b}$.
$\mathbf{A} \odot \mathbf{B}$ is the Hadamard product (or element-wise multiplication) of two matrices $\mathbf{A}$ and $\mathbf{B}$.
Given a function $f:\mathbf{X}\in\mathbb{R}^{n_1\times n_2}\rightarrow f(\mathbf{X})\in\mathbb{R}$, $\nabla_{\mathbf{X}}f(\mathbf{X})$ is the Euclidean gradient of $f(\mathbf{X})$ with respect to $\mathbf{X}$, i.e., $\left[\nabla_{\mathbf{X}}f(\mathbf{X})\right]_{ij}=\frac{\partial f(\mathbf{Y})}{\partial y_{ij}}$.
$\mathbf{1} = \matc{1 & 1 & \cdots & 1}^T$ is all-ones vector.

\section{Proposed SAFL Algorithm}
\label{fl:sec001}


We consider a communication system consisting of one central server and $n$ user devices. The server generates a global model with parameters $\mathbf{w}$ and then transmits the generated model to $s$ selected devices $(1\leq s\leq n)$. 
Each selected device has its own dataset $\mathcal{D}^{(k)} = \{ (\mathbf{x}_{i}^{(k)}, y_{i}^{(k)}) \}_{i=1}^{m_{k}}$ to train the local model, where $\mathbf{x}_{i}^{(k)} \in \mathbb{R}^{q}$ is an input data sample (e.g., image), $y_{i}^{(k)}$ is the class label of $\mathbf{x}_{i}^{(k)}$, and $m_{k}$ is the number of data samples in the $k$-th device. We consider the standard FL setting where devices cannot exchange their own datasets with other devices or the central server. In each iteration, FedAvg updates the model parameters $\mathbf{w}$ (e.g., weights and biases) by taking the following steps. First, using its own dataset $\mathcal{D}^{(k)}$, each user device updates the model parameters locally to minimize the loss function $F(\mathbf{w}; \mathcal{D}^{(k)})$.\footnote{For example, if the mean squared error (MSE) is employed as a loss function, then $F(\mathbf{w}; \mathcal{D}) = \frac{1}{|\mathcal{D}|} \sum_{(\mathbf{x}, y) \in \mathcal{D}} \frac{1}{2}(\mathbf{x}^{T} \mathbf{w} - y)^{2}$.} For example, the update expression of the model parameters $\mathbf{z}_{t}^{(k)}$ at the $k$-th device is
\begin{align} \label{eq:gradient descent_locally updated parameters}
    \mathbf{z}_{t}^{(k)}
    &= \mathbf{w}_{t-1}^{(k)} - \alpha \left . \frac{\partial F(\mathbf{w}; \mathcal{D}^{(k)})}{\partial \mathbf{w}} \right |_{\mathbf{w}=\mathbf{w}_{t-1}^{(k)}},
\end{align}
where $\alpha$ is the learning rate and $\mathbf{w}_{t-1}^{(k)}$ is the local model parameters after $t-1$ iterations. Second, the server aggregates the local updates $\mathbf{z}_{t}^{(k)}$ to evaluate the global model parameters. The update expression of the global evaluation model is 
\begin{equation}
\bar{\mathbf{z}}_{t} = \sum_{k=1}^{n}\eta_k \mathbf{z}_{t}^{(k)},
\label{eq:eq001}
\end{equation}
where $\eta_k$ is the coefficient satisfying $\sum_k\eta_k = 1$.\footnote{We consider the generic setting of $\eta_k$ which is an arbitrary value defined by user. A typical setting of $\eta_k$ is $\eta_k = \frac{m_k}{\sum_k m_k}$~\cite{mcmahan2017}.} Note that when $s < n$, we simply set $\eta_k = 0$ for non-selected devices. Finally, the server transmits the globally updated parameters $\bar{\mathbf{z}}_{t}$ to the selected devices to update the local models. That is, the local model parameters $\mathbf{w}_{t}^{(k)}$ is updated as 
\begin{equation}
\mathbf{w}_{t}^{(k)} = \left\lbrace \begin{matrix}
\overline{\mathbf{z}}_{t} & \text{if the device receives } \overline{\mathbf{z}}_{t}\\
\mathbf{z}_{t}^{(k)} & \text{otherwise}
\end{matrix} \right. .
\label{fl:eq300}
\end{equation}


One potential drawback of the conventional FedAvg technique is that an entire FL process can be degraded by applying the hard-decision rule in \eqref{fl:eq300}. This is because the global evaluation model $\bar{\mathbf{z}}_{t}$ is not necessarily better than locally updated parameters $\mathbf{z}_{t}^{(k)}$ in many practical scenarios. 
For example, in the next word prediction application, a language model is trained to predict which word comes next when the initial text fragment is given.
In heterogeneous scenarios, users with different countries might use their own mother languages with different grammar and word combination rules (e.g., a subject-verb-object (SVO) rule is used in English, while a subject-object-verb (SOV) rule is used in Korean). Since the next word prediction task is performed with different language rules, the average-based model might perform much worse than the locally trained language model of a local device.

As another example, one can consider the face and object recognition problem where a classification model is trained to identify the user’s face ID.
The local dataset collected from user's personal images is often non-i.i.d. distributed across devices. 
Since the global model is aggregated by averaging the locally trained models, it may overfit to the local data.
In this case, if the device uses the average-based model exclusively, the device might also suffer the overfiting problem, even when the good training dataset is available.
Indeed, it has been shown that FedAvg can diverge in such non-i.i.d. scenario~\cite{sahu2020, li2020federated}.

\begin{figure*}[t]
\centering
\begin{adjustbox}{max width=\textwidth,max totalheight=\textheight,keepaspectratio}
\begin{tikzpicture}

\node (c0) {\includegraphics[scale=0.55]{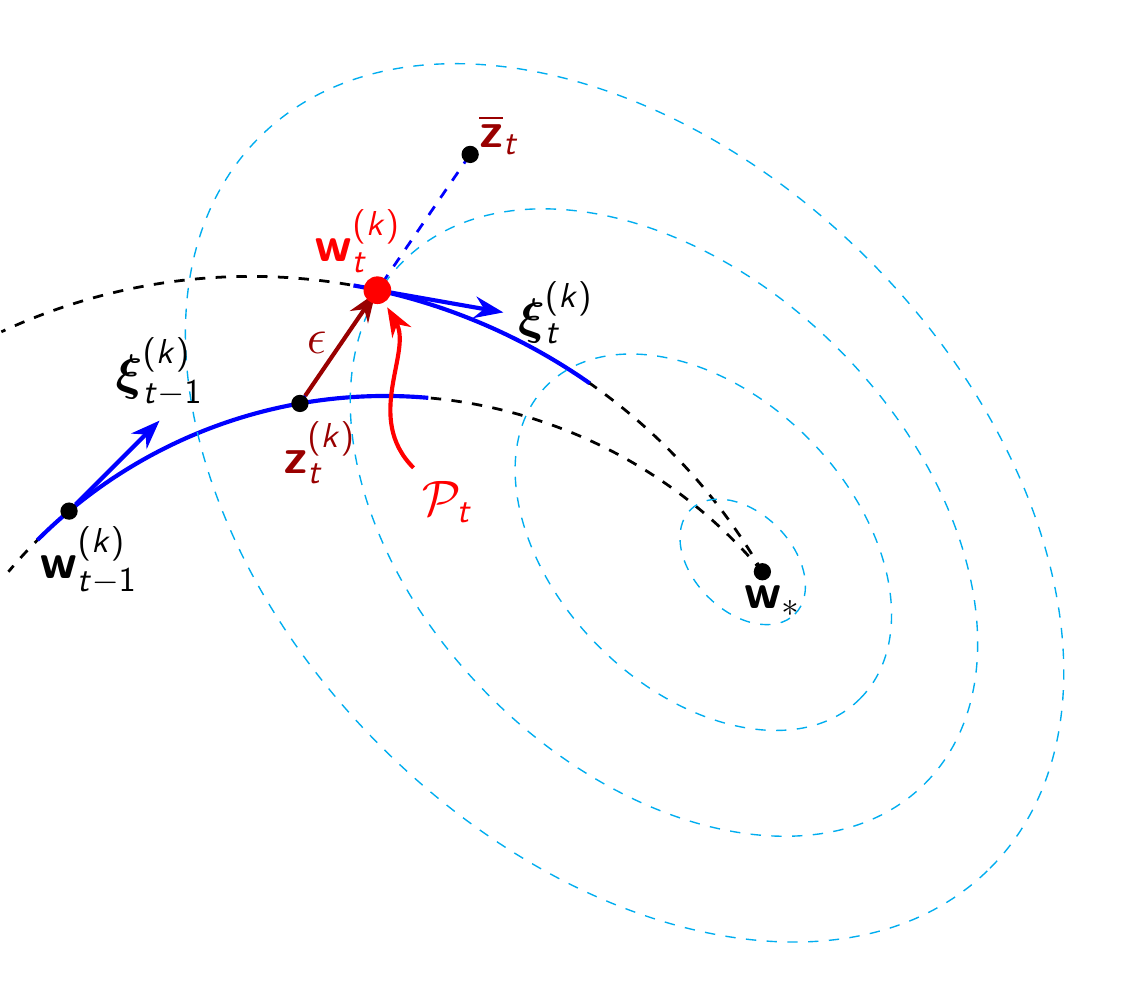}};
\node[below of=c0,yshift=-2cm] {\large (a)};
\node[right of=c0,xshift=5.5cm] (c1) {\includegraphics[scale=0.55]{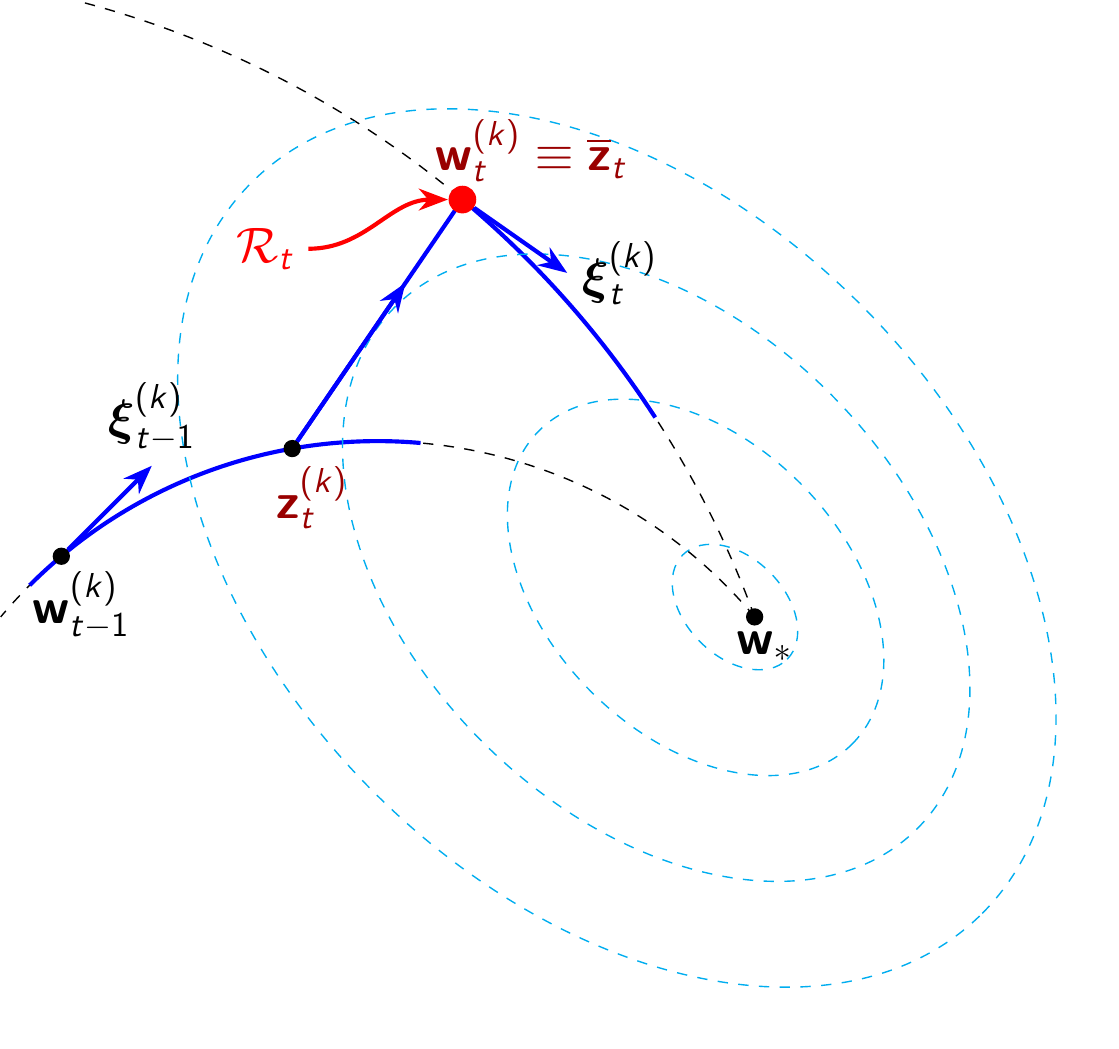}};
\node[below of=c1,yshift=-2cm] {\large (b)};

\end{tikzpicture}
\end{adjustbox}
\caption{In each iteration of the proposed SAFL technique, each device (a) moves to the perturbation state $\mathcal{P}_{t}^{(k)}$ with probability $p$ or (b) stays in the normal state $\mathcal{R}_{t}^{(k)}$ with probability $1-p$. In this figure, $\boldsymbol\xi_{t}^{(k)}$ is a descent direction.}
\label{fl:fig002}
\end{figure*}

\begin{table}[t!]
    \centering
    \caption{The proposed SAFL Algorithm} \label{tab:SAFL}
{
\begin{tabular}{l}
\hline
\textbf{Algorithm 1:} Proposed SAFL \\
\hline
\textbf{Input:} $T$: max iteration\\
\qquad\quad $E$: max local epoch\\
\qquad\quad $L$: control parameter\\
\qquad\quad $\{\eta_k\}_k$: weight coefficients\\
\qquad\quad $\{\mathbff{w}{0}{k}\}_k$: parameter initialization of the devices\\
\qquad\quad $s$: number of selected devices each round\\
\qquad\quad $t = 1$: initial iteration\\
\hline
\textbf{While} $t < T$ and a stopping criterion is not met \textbf{do:}\\
\qquad \textbf{For} the server \textbf{do:}\\
\qquad\qquad \textbf{If} the server receives $\mathbf{z}_t^{(k)}$ from the devices \textbf{then do:} \\
\qquad\qquad\qquad $\overline{\mathbf{z}}_t = \sum\limits_{k=1}^n \eta_k \mathbf{z}_t^{(k)}$ \\
\qquad\qquad\qquad Select a random set of devices $S_t$ satisfying $|S_t| = s$  \\
\qquad\qquad\qquad Send $\overline{\mathbf{z}}_t$ to $S_t$ \\
\qquad\qquad \textbf{End If}\\
\qquad \textbf{End For}\\

\qquad \textbf{For} device $k\in S_t$ \textbf{in parallel do:}\\
\qquad\qquad \textbf{For} e = 1 to E \textbf{do:}\\
\qquad\qquad\qquad \textbf{For all} example $A_{t}^{(i_k)}$, $i_k\in\{1,2,\cdots,m_k\}$ \textbf{do:}\\
\qquad\qquad\qquad\qquad $\mathbff{z}{t}{k} =\mathbff{w}{t-1}{k} - \alpha\nabla F_k(\mathbff{w}{t-1}{k};A_{t}^{(i_k)})$ \\
\qquad\qquad\qquad\qquad \textbf{If} the device receives $\overline{\mathbf{z}}_t$ from the server \textbf{then do:} \\
\qquad\qquad\qquad\qquad\qquad Generate $\mathbff{u}{t}{k}$ using \eqref{eq:definition of u}\\
\qquad\qquad\qquad\qquad\qquad $\mathbff{w}{t}{k} =  \mathbff{u}{t}{k}\odot\overline{\mathbf{z}}_{t} + (\mathbf{1}- \mathbff{u}{t}{k})\odot\mathbff{z}{t}{k}$ \\
\qquad\qquad\qquad\qquad \textbf{Else do:}  \\
\qquad\qquad\qquad\qquad\qquad $\mathbff{w}{t}{k} = \mathbff{z}{t}{k}$ \\
\qquad\qquad\qquad\qquad \textbf{End If}  \\
\qquad\qquad\qquad\qquad $t = t + 1$\\
\qquad\qquad\qquad \textbf{End For}\\
\qquad\qquad \textbf{End For}\\
\qquad\qquad Send $\mathbf{z}_t^{(k)}$ to the server\\
\qquad \textbf{End For}\\
\textbf{End While}\\
\hline
\textbf{Output:} $\widehat{\mathbf{w}}_t = \sum_{k}\mathbf{w}^{(k)}_t$\\
\hline
\end{tabular}
}
\end{table}

Inspired by this observation, we first define a weighted sum model that incorporates $\mathbf{z}_{t}^{(k)}$ and $\overline{\mathbf{z}}_{t}$. The corresponding local update model is expressed as
\begin{equation}
\mathbf{w}_{t}^{(k)} = \left\lbrace \begin{matrix}
\epsilon \overline{\mathbf{z}}_{t} + (1-\epsilon)\mathbff{z}{t}{k} & \text{if the device receives } \overline{\mathbf{z}}_{t}\\
\mathbf{z}_{t}^{(k)} & \text{otherwise}
\end{matrix} \right. .
\label{fl:eq107}
\end{equation}
where $\epsilon$ is the regularization parameter used to control the contribution of the global evaluation model in \eqref{fl:eq107}.
For example, by setting $\epsilon = 1$, the update expression \eqref{fl:eq107} is returned to the conventional FL case.
Whereas, by setting $\epsilon = 0$, the device ignores the server feedback $\overline{\mathbf{z}}_{t}$ and continues to use the locally trained model $\mathbf{z}_{t}^{(k)}$.

For the model selection, we consider a strategy inspired by the SA algorithm.
In the proposed SAFL, we define the normal state $\mathcal{R}_t$ and the perturbation state $\mathcal{P}_t$ as $\overline{\mathbf{z}}_{t}$ and $\epsilon \overline{\mathbf{z}}_{t} + (1-\epsilon)\mathbff{z}{t}{k}$, respectively (see Fig. \ref{fl:fig002}).
%
$\mathcal{P}_t$ is accepted with probability $p = \exp(-\frac{t}{L})$ where $L$ is a positive constant (a.k.a., the maximum temperature of SA~\cite{eglese1990}), while $\mathcal{R}_t$ is with probability $1-p$. 
To be specific, let $\mathbf{u}_{t}^{(k)}$ be the random vector whose $j$-th element $u_{j}$ satisfies
\begin{align} \label{eq:definition of u}
    u_{j}
    & = \begin{cases}
    \epsilon & \text{with probability } p=\exp \left ( -\frac{t}{L} \right ),\\
    1 & \text{with probability } 1-p,
    \end{cases}
\end{align}
then the local update expression \eqref{fl:eq107} can be reformulated as (see Fig.~\ref{fl:fig002})
\begin{align} \label{eq:local update formula}    
    \mathbf{w}_{t}^{(k)} & = \left\lbrace \begin{matrix}
\mathbf{u}_{t}^{(k)} \odot \bar{\mathbf{z}}_{t} + (\mathbf{1} - \mathbf{u}_{t}^{(k)}) \odot \mathbf{z}_{t}^{(k)} & \text{if receives } \overline{\mathbf{z}}_{t}\\
\mathbf{z}_{t}^{(k)} & \text{otherwise}
\end{matrix} \right. .
\end{align}
Note that the model selection probability $p$ decays exponentially with the number of iteration. In early iterations (i.e., $p$ is close to one), each device relies on its locally trained model and thus the local model would be trained mainly by the local dataset. In later iterations (i.e., $p$ is close to zero), the device uses the global evaluation model which is presumably more robust to the overfitting problem than the locally trained model.

We note that the server update procedure of SAFL is essentially the same as the conventional FedAvg so that various fusion models can be easily integrated to SAFL~\cite{chensun2019, yeganehfarshad2020, jisaravirta2021, jipanlong2019, jiangjilong2020, wuliangwang2020, huangchuzhou2021}.
For example, if we integrate the inverse distance aggregation (IDA) fusion model~\cite{yeganehfarshad2020} and SAFL, the coefficient $\eta_k$ is expressed as~\cite{jisaravirta2021} 
\begin{align}
\eta_k = \frac{\|\overline{\mathbf{z}}_t-\mathbf{z}_t^{(k)}\|_2^{-1}}{\sum\limits_{k=1}^n \|\overline{\mathbf{z}}_t-\mathbf{z}_t^{(k)}\|_2^{-1}}.  
\end{align} 

In Algorithm~\ref{tab:SAFL}, we summarize the proposed SAFL algorithm.

\section{Convergence Analysis of SAFL}
\label{sec:convergence analysis}


In this section, we analyze the convergence behavior of the proposed SAFL. 
%
%
For simplicity, we consider the scenario where each participating device updates its local model using the stochastic gradient descent (SGD)~\cite{bottou2018}.
Let $\delta_t$ be a user-predefined value satisfying 
\begin{align} \label{eq:definition of d}
    \delta_t
    & = \begin{cases}
    1 & \text{if the $k$-th device receives the server feedback }\overline{\mathbf{z}}_t ,\\
    0 & \text{else} ,
    \end{cases}
\end{align}
Then, the update expressions \eqref{eq:gradient descent_locally updated parameters} and \eqref{eq:local update formula} can be reformulated as
\begin{eqnarray}
\label{eq:local training formula}
\mathbff{z}{t}{k} & = & \mathbff{w}{t-1}{k} - \alpha\nabla F_k(\mathbff{w}{t-1}{k};A_{t}^{(i_k)}) ,\\
\label{eq:local update analysis}
\mathbf{w}_{t}^{(k)}
    & = & \delta_t\mathbf{u}_{t}^{(k)} \odot \bar{\mathbf{z}}_{t} + (\mathbf{1} - \delta_t\mathbf{u}_{t}^{(k)}) \odot \mathbf{z}_{t}^{(k)}.
\end{eqnarray}
where the input data $A_{t}^{(i_k)} = (\mathbf{x}_{i_k},y_{i_k})\in\mathcal{D}^{(k)}$ is sampled identically and independently at each iteration. Here, we put no assumption on the data distribution so that our analysis results can be applied for both i.i.d. and non-i.i.d. scenarios. Also note that $F_k(\mathbff{w}{t-1}{k};A_{t}^{(i_k)})$ is the cost function with respect to the data sample $A_{t}^{(i_k)}$ and $F_k(\mathbff{w}{t-1}{k})$ is the empirical risk function defined as 
\begin{equation}
F_k(\mathbff{w}{t-1}{k}) = \frac{1}{|\mathcal{D}^{(k)}|} \sum\limits_{i_k} F_k(\mathbff{w}{t-1}{k};A_{t}^{(i_k)}).
\end{equation}   

Before proceeding, we summarize the assumptions used in our analysis:

\begin{itemize}
\item[\textbf{A1}] $F_k(\mathbf{w})$ is non-negative: $F_k(\mathbf{w})\geq 0$ and $F_k(\mathbf{w}_\ast) = 0$ 
\item[\textbf{A2}] $F_k(\mathbf{w})$ is a smooth convex function: $\lambda\mathbf{I}\succeq \nabla^2 F_k(\mathbf{w}) \succeq \mu \mathbf{I}$ for $\lambda \geq \mu \geq 0$.
\item[\textbf{A3}] The stochastic gradient $\nabla F_k(\mathbff{w}{t}{k};A_{t}^{(i_k)} )$ has a bounded variance:
\begin{align}
tr(Var(\nabla F_k(\mathbff{w}{t-1}{k}) - \nabla F_k(\mathbff{w}{t-1}{k};A_{t}^{(i_k)})|\mathbff{w}{t-1}{k})) & \leq  \sigma_k^2 .
\end{align}


\end{itemize}

It is worth mentioning that these assumptions are used in various machine learning problems, such as linear regression, Tikhonov regularization, logistic regression, and support vector machine (SVM)~\cite{wanghan2019,yuchen2012,yossi2015}.

Without loss of generality, we focus on the minimization problem\footnote{The maximization problem can be converted into a minimization problem with the same solution by multiplying the objective function by $-1$.} of the empirical risk. Hence, \textbf{A1} ensures that the objective function is to be minimized to zero. When the objective function has a nonzero lower bound, say, $F_k(\mathbf{w}) \geq F_0$ for some constant $F_0$, we simply define a new objective function $\widetilde{F}_k(\mathbf{w}) = F_k(\mathbf{w}) - F_0$ and easily extend the analysis results to $\widetilde{F}_k(\mathbf{w})$.
Assumption \textbf{A2} is popularly used to guarantee a linear convergence rate of many gradient descent-based machine learning techniques~\cite{bottou2018}. Equivalently, \textbf{A2} can be expressed as~\cite{bottou2018}
\begin{itemize}
\item[\textbf{A2a}] $\nabla F_k(\mathbf{w})$ is $\lambda$-Lipschitz continuous:
\begin{align}
\|\nabla F_k(\mathbf{w}_2) - \nabla F_k(\mathbf{w}_1)\|_2  \leq \lambda\|\mathbf{w}_2-\mathbf{w}_1\|_2, \forall \mathbf{w}_1, \mathbf{w}_2.
\end{align}
\item[\textbf{A2b}] $F_k(\mathbf{w})$ is a $\mu$-strongly convex function:
\begin{align}
F_k(\mathbf{w}_2) & \geq F_k(\mathbf{w}_1) + \nabla F_k(\mathbf{w}_1)^T(\mathbf{w}_2  - \mathbf{w}_1)\nonumber\\
& \qquad + \frac{\mu }{2}\|\mathbf{w}_2 - \mathbf{w}_1\|_2^2, \forall \mathbf{w}_1, \mathbf{w}_2
\end{align}
for $\lambda\geq \mu \geq 0$.
\end{itemize}
Intuitively, \textbf{A2} ensures that there exists a quadratic lower bounds on the growth of the objective function. In our analysis, we use \textbf{A2}, together with Taylor's expansion, to build a universal upper bound on the MSE of the local updates $\mathbf{w}_t^{(k)}$.
Assumption \textbf{A3} is referred to as bounded variance condition in the literature~\cite{bottou2018}, which is widely used in the SGD convergence analysis~\cite{li2020federated,jiang2018,yu2019}.

In our main theorem, under \textbf{A1}, \textbf{A2}, and \textbf{A3}, we show that the proposed SAFL converges linearly\footnote{A sequence $\{u_t\}_{t=1}^\infty$ is said to converge linearly to $u_\ast$ if there exists a number $\lambda\in(0,1)$ such that $\lim\limits_{t\rightarrow\infty}\frac{|u_{t+1}-u_\ast|}{|u_t-u_\ast|}=\lambda$. Also, if $\lambda = 1$, then the sequence is said to converge sublinearly to $u_\ast$.} to an accurate solution.

\begin{theorem}
Under \textbf{A1}, \textbf{A2}, and \textbf{A3}, the MSE error bound of SAFL satisfies
\begin{align}
E[\|\widehat{\mathbf{w}}_t - \mathbf{w}_\ast\|_2^2] & \leq  (1-\alpha\mu)^{2t}\zeta  
 +  \frac{\alpha}{\mu} \sum\limits_{k=1}^n  \eta^2_k\sigma_k^2  \frac{1 - (1-\alpha\mu)^{2q}}{1 -  e^{-c}(1-\alpha\mu)^{2q}} ,
\label{fl:eq209}
\end{align}
where $\widehat{\mathbf{w}}_t = \sum\limits_{k=1}^n\eta_k \mathbf{w}_t^{(k)}$, $\zeta = \max\limits_k E[\|\mathbff{w}{0}{k} - \mathbf{w}_\ast\|_2^2]$, $q$ is the largest number of local iterations and $c = (1 - p(1 - \epsilon^2))(\frac{1   - \alpha  (2\lambda - \mu)}{1-\alpha\mu})^2$ for some $p$ and $\epsilon$, provided that $\alpha < \frac{1}{2\lambda-\mu}$.
\label{fl:thm002}
\end{theorem}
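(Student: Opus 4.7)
The plan is to establish a per-iteration recursive inequality of the form
\begin{align*}
    E[\|\widehat{\mathbf{w}}_{t} - \mathbf{w}_{*}\|_{2}^{2}]
    & \le \rho_{t}\, E[\|\widehat{\mathbf{w}}_{t-1} - \mathbf{w}_{*}\|_{2}^{2}] + \alpha^{2} \sum_{k=1}^{n} \eta_{k}^{2} \sigma_{k}^{2},
\end{align*}
where $\rho_{t}$ is a contraction factor whose precise form depends on whether the device receives the server feedback at iteration $t$ (i.e., on the value of $\delta_{t}$), and then to unroll this recursion over the $t$ iterations. Each iteration splits naturally into two pieces: (i) the local SGD step \eqref{eq:local training formula} producing $\mathbf{z}_{t}^{(k)}$ from $\mathbf{w}_{t-1}^{(k)}$, and (ii) the SA-based aggregation step \eqref{eq:local update analysis} producing $\mathbf{w}_{t}^{(k)}$ from $\mathbf{z}_{t}^{(k)}$ and the server feedback $\bar{\mathbf{z}}_{t}$.

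For the local SGD step, I would expand $\|\mathbf{z}_{t}^{(k)} - \mathbf{w}_{*}\|_{2}^{2}$ and take conditional expectation with respect to the random mini-batch $A_{t}^{(i_{k})}$. Unbiasedness of the stochastic gradient together with the variance bound in \textbf{A3} yields
\begin{align*}
    E[\|\mathbf{z}_{t}^{(k)} - \mathbf{w}_{*}\|_{2}^{2} \mid \mathbf{w}_{t-1}^{(k)}]
    & \le \|\mathbf{w}_{t-1}^{(k)} - \alpha \nabla F_{k}(\mathbf{w}_{t-1}^{(k)}) - \mathbf{w}_{*}\|_{2}^{2} + \alpha^{2} \sigma_{k}^{2}.
\end{align*}
A first-order Taylor expansion of $\nabla F_{k}$ around $\mathbf{w}_{*}$, combined with the Hessian bound $\lambda \mathbf{I} \succeq \nabla^{2} F_{k}(\mathbf{w}) \succeq \mu \mathbf{I}$ from \textbf{A2}, then bounds the deterministic descent term by a contractive factor whose per-coordinate values lie between $(1 - \alpha(2\lambda - \mu))^{2}$ and $(1 - \alpha\mu)^{2}$. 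The step-size condition $\alpha < 1/(2\lambda - \mu)$ keeps the smaller factor (and hence every component) positive, which is essential for treating the contraction as nonnegative later.

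For the SA-based aggregation step, I would exploit the coordinate-wise independent Bernoulli structure of $\mathbf{u}_{t}^{(k)}$ defined in \eqref{eq:definition of u}. The key moment identities are $E[u_{j}] = 1 - p(1 - \epsilon)$ and $E[u_{j}^{2}] = 1 - p(1 - \epsilon^{2})$; the latter is precisely the factor $1 - p(1 - \epsilon^{2})$ that appears in the definition of $c$. Expanding $\|\mathbf{w}_{t}^{(k)} - \mathbf{w}_{*}\|_{2}^{2}$ coordinate by coordinate and taking expectation over $\mathbf{u}_{t}^{(k)}$ yields a weighted combination of $\|\bar{\mathbf{z}}_{t} - \mathbf{w}_{*}\|_{2}^{2}$, $\|\mathbf{z}_{t}^{(k)} - \mathbf{w}_{*}\|_{2}^{2}$, and a cross term; the first two are controlled via the SGD bound together with convexity of $\|\cdot\|_{2}^{2}$ applied to $\bar{\mathbf{z}}_{t} = \sum_{k}\eta_{k}\mathbf{z}_{t}^{(k)}$, while the cross term is handled via Young's or Cauchy--Schwarz inequality.

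Combining the two sub-steps and weighting by $\eta_{k}$ to form $\widehat{\mathbf{w}}_{t} = \sum_{k} \eta_{k} \mathbf{w}_{t}^{(k)}$ produces the target recursion. Iterating across the $q$ consecutive local iterations within a communication round yields a geometric factor $(1 - \alpha\mu)^{2q}$, and the SA aggregation at the round boundary contributes an additional contraction factor involving $1 - p(1 - \epsilon^{2})$ and $(1 - \alpha(2\lambda - \mu))/(1 - \alpha\mu)$; applying the elementary inequality $1 - x \le e^{-x}$ to this quantity produces the factor $e^{-c}$. Summing the resulting geometric series across rounds gives the ratio $\frac{1 - (1 - \alpha\mu)^{2q}}{1 - e^{-c}(1 - \alpha\mu)^{2q}}$ in the final bound. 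I expect the hardest step to be the SA-aggregation analysis: since $\bar{\mathbf{z}}_{t}$ couples all devices through its dependence on every $\mathbf{z}_{t}^{(k)}$, while the masks $\mathbf{u}_{t}^{(k)}$ are independent across devices but correlated within a device, carefully tracking the cross terms between devices without discarding the $1 - \alpha(2\lambda - \mu)$ piece is where the tight form of $c$ is earned.
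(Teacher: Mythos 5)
Your high-level ingredients (a contraction from \textbf{A2}, the Bernoulli moments $E[u_j]=1-p(1-\epsilon)$ and $E[u_j^2]=1-p(1-\epsilon^2)$, the geometric sum over rounds, and $1-x\le e^{-x}$ giving $e^{-c}$) are all present in the paper, but your central device --- a single recursion $E[\|\widehat{\mathbf{w}}_{t}-\mathbf{w}_*\|_2^2]\le\rho_t E[\|\widehat{\mathbf{w}}_{t-1}-\mathbf{w}_*\|_2^2]+\alpha^2\sum_k\eta_k^2\sigma_k^2$ on the \emph{averaged} error --- does not go through. Between server aggregations the devices drift apart, so $\widehat{\mathbf{w}}_{t-1}$ does not determine the individual states $\mathbf{w}_{t-1}^{(k)}$ that drive the next step; and if you instead recurse on $\sum_k\eta_k\|\mathbf{w}_t^{(k)}-\mathbf{w}_*\|_2^2$ (the natural Jensen surrogate), the noise term you inherit is $\alpha^2\sum_k\eta_k\sigma_k^2$, not $\alpha^2\sum_k\eta_k^2\sigma_k^2$. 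The $\eta_k^2$ weighting is only earned by splitting $E[\|\sum_k\eta_k\mathbf{w}_t^{(k)}-\mathbf{w}_*\|_2^2]$ into diagonal terms $\sum_k\eta_k^2E[\|\mathbf{w}_t^{(k)}-\mathbf{w}_*\|_2^2]$ plus cross terms, and exploiting independence across devices so that the cross terms involve only the \emph{biases} $\|E[\mathbf{w}_t^{(k)}-\mathbf{w}_*]\|_2$, which contract as $(1-\alpha\mu)^t$ with no variance accumulation. This is exactly the paper's structure (two separate lemmas, one for the second moment and one for the first moment), and it is the missing idea in your plan.

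A second gap is in your treatment of the SA cross term. The term $2\delta_t(\Delta\mathbf{z}_t^{(k)})^T\mathrm{Diag}(\mathbf{u}_t^{(k)})\mathbf{A}\mathbf{e}_{t-1}^{(k)}$ enters with a favorable sign: the paper must show that the combination $\Omega=2(1-p(1-\epsilon))(\Delta\mathbf{z}_t^{(k)})^T\mathbf{A}\mathbf{e}_{t-1}^{(k)}-(1-p(1-\epsilon^2))\|\Delta\mathbf{z}_t^{(k)}\|_2^2$ is bounded \emph{below} by $(1-p(1-\epsilon^2))(1-\alpha(2\lambda-\mu))^2\|\mathbf{e}_{t-1}^{(k)}\|_2^2$, since it is this extra subtracted contraction that produces the factor $e^{-c}$ per round. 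Handling the cross term "via Young's or Cauchy--Schwarz" bounds its magnitude from above and throws this gain away, so you would not recover the stated constant $c$; the actual argument requires the inner-product bound $2\mathbf{x}^T\mathbf{A}\mathbf{y}\ge 2\lambda_{\min}(\mathbf{A})-\lambda_{\max}(\mathbf{A})\ge 1-\alpha(2\lambda-\mu)$ (which, incidentally, is where $2\lambda-\mu$ comes from --- not from the SGD contraction, whose eigenvalues lie in $[1-\alpha\lambda,\,1-\alpha\mu]$) together with a specific coupling of $\epsilon$ and $p$ (the theorem's "for some $p$ and $\epsilon$") that makes a certain quadratic in $\epsilon$ nonnegative.
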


\begin{remark}
The right-hand side of \eqref{fl:eq209} consists of two terms: 1) the first term $(1-\alpha\mu)^{2t}\zeta$ converges linearly to zero with the iteration $t$ and 2) the second term is a function of the learning rate $\alpha$ and can be reduced with a small $\alpha$. In fact, when $\alpha = \mathcal{O}(\frac{1}{t})$, we can further show that SAFL converges sublinearly to the optimal solution.
%
%
%
\begin{corollary}
Under the same conditions of Theorem \ref{fl:thm002}, if $\alpha_t = \frac{\alpha_0}{t+1}$ for some $\alpha_0$  satisfying $ \frac{2-\sqrt{2}}{\mu}< \alpha_0 < \frac{2+\sqrt{2}}{\mu}$, then the MSE bound of SAFL satisfies
\begin{equation}
E[\|\widehat{\mathbf{w}}_t - \mathbf{w}_\ast\|_2^2] \leq \frac{c}{t+1},
\label{fl:eq206}
\end{equation}
where $c = \max\{\frac{2\alpha_0^2(\max_k  \sigma_k^2)}{2 - (2 - \mu\alpha_0)^2},\max\limits_k E[\|\mathbff{w}{0}{k} - \mathbf{w}_\ast\|_2^2]\}$. 
\label{fl:col001}
\end{corollary}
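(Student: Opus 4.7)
The plan is to first extract a one-step recursion from the analysis underlying Theorem \ref{fl:thm002}, and then establish the $\mathcal{O}(1/t)$ bound \eqref{fl:eq206} by induction on $t$. Under Assumptions \textbf{A1}--\textbf{A3}, combining the Lipschitz-smoothness and $\mu$-strong-convexity estimates used for Theorem \ref{fl:thm002} with the bounded-variance bound \textbf{A3} on the stochastic gradient noise should yield a per-iteration recursion of the form
\begin{equation*}
a_{t+1} \;\le\; \rho(\alpha_t)\, a_t \;+\; \alpha_t^2\,\bar{\sigma}^2 ,
\end{equation*}
where $a_t = E[\|\widehat{\mathbf{w}}_t - \mathbf{w}_*\|_2^2]$, $\rho(\alpha)$ is the per-step contraction factor implicit in \eqref{fl:eq209}, and $\bar{\sigma}^2$ is a constant built from the $\sigma_k^2$. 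Substituting $\alpha_t = \alpha_0/(t+1)$ then produces the time-varying recursion on which the induction will run.

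Next, I would prove $a_t \le c/(t+1)$ by induction on $t$. The base case is immediate: the constant $c$ is defined as a maximum that dominates $\max_k E[\|\mathbf{w}_0^{(k)} - \mathbf{w}_*\|_2^2]$, and by Jensen's inequality applied to $\widehat{\mathbf{w}}_0 = \sum_k \eta_k \mathbf{w}_0^{(k)}$ with $\sum_k \eta_k = 1$, this in turn dominates $a_0$. For the inductive step, substituting the hypothesis $a_t \le c/(t+1)$ together with $\alpha_t = \alpha_0/(t+1)$ into the recursion and clearing denominators by $(t+1)^2(t+2)$ reduces the desired inequality $a_{t+1} \le c/(t+2)$ to a polynomial inequality in $t$ that is linear and monotone in $c$. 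The restriction $\tfrac{2-\sqrt{2}}{\mu} < \alpha_0 < \tfrac{2+\sqrt{2}}{\mu}$ is precisely what makes $2-(2-\mu\alpha_0)^2 > 0$, so the first argument of the maximum defining $c$ is a positive finite number; taking $c$ at least this large then makes the polynomial inequality hold uniformly in $t$, with the binding case occurring at $t = 0$.

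The main obstacle will be bookkeeping rather than a single analytic hurdle. First, I must argue that the simulated-annealing factor $(1-p(1-\epsilon^2))$ and the compound factor $\bigl((1-\alpha(2\lambda-\mu))/(1-\alpha\mu)\bigr)^{2}$ that appear in Theorem \ref{fl:thm002} collapse, after taking expectation over the random selector $\mathbf{u}_t^{(k)}$ and invoking \textbf{A2}, into a single clean per-step contraction $\rho(\alpha_t)$; otherwise the recursion picks up residual terms that would prevent a genuine $1/t$ rate. Second, the inductive algebra is tight exactly at $t=0$, which is precisely where the restriction on $\alpha_0$ binds, so the quadratic in $\mu\alpha_0$ emerging from the calculation must be matched carefully to the factor $2-(2-\mu\alpha_0)^2$ appearing in the denominator of $c$; this is what ultimately forces both the admissible range of $\alpha_0$ and the specific form of $c$ stated in the corollary.
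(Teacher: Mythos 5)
Your overall architecture --- a one-step recursion of the form $a_{t+1}\le\rho(\alpha_t)a_t+\alpha_t^2\bar\sigma^2$, followed by induction to get $c/(t+1)$ --- is exactly the paper's, and your identification of the role of the window $\frac{2-\sqrt2}{\mu}<\alpha_0<\frac{2+\sqrt2}{\mu}$ (it is precisely the condition $2-(2-\mu\alpha_0)^2>0$, equivalently $2(1-\alpha_0\mu)^2<\alpha_0^2\mu^2$) is correct. The genuine gap is the quantity you run the recursion on. You take $a_t=E[\|\widehat{\mathbf{w}}_t-\mathbf{w}_*\|_2^2]$ with $\widehat{\mathbf{w}}_t=\sum_k\eta_k\mathbf{w}_t^{(k)}$, but the analysis behind Theorem \ref{fl:thm002} does not produce a self-contained recursion for the aggregated error: each local step evaluates $\nabla F_k$ at the \emph{per-device} iterate $\mathbf{w}_{t-1}^{(k)}$, not at $\widehat{\mathbf{w}}_{t-1}$, so any attempt to close a recursion in $a_t$ alone leaves uncontrolled drift terms of the form $\|\mathbf{w}_{t-1}^{(k)}-\widehat{\mathbf{w}}_{t-1}\|_2$. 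The paper avoids this entirely: it first applies Jensen's inequality (with $\sum_k\eta_k=1$) to get $E[\|\widehat{\mathbf{w}}_t-\mathbf{w}_*\|_2^2]\le\max_k E[\|\mathbf{w}_t^{(k)}-\mathbf{w}_*\|_2^2]$, and then runs your induction on the \emph{per-device} errors using the per-device recursion \eqref{fl:eq024}, namely $E[\|\mathbf{e}_t^{(k)}\|_2^2]\le(1-\alpha_t\mu)^2E[\|\mathbf{e}_{t-1}^{(k)}\|_2^2]+\alpha_t^2\sigma_k^2$. If you redefine your $a_t$ as $\max_k E[\|\mathbf{w}_t^{(k)}-\mathbf{w}_*\|_2^2]$, your induction carries through verbatim and Jensen converts the conclusion into \eqref{fl:eq206}; as stated, the recursion you posit is not available.

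Two smaller points. First, the ``obstacle'' you flag about collapsing the simulated-annealing factor $(1-p(1-\epsilon^2))\bigl(\frac{1-\alpha(2\lambda-\mu)}{1-\alpha\mu}\bigr)^2$ into a clean contraction is a non-issue: in \eqref{fl:eq024} that term enters with a negative sign (it is subtracted from $(1-\alpha\mu)^2$), so it is simply dropped, leaving the contraction factor $(1-\alpha_t\mu)^2$. Second, the inequality does not bind at $t=0$; the base case is handled trivially by the second argument of the maximum defining $c$, and the tight step in the paper's induction is the first inductive step ($t=1$), where the final comparison reduces to $\alpha_0^2\mu^2(1-t)\le0$.
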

\begin{proof}
See Appendix \ref{fl:apxAA}.
\end{proof}
One can see that the MSE of the proposed SAFL scales in the order of $\mathcal{O}(\frac{1}{t})$.
This MSE bound matches to the latest results of federated optimization bound~\cite{li2020federated,jiang2018,yu2019}.
\end{remark} 

\begin{remark}
In Theorem \ref{fl:thm002}, the impact of the network size $n$ on the MSE bound is captured by the factor $\sum\limits_{k=1}^n \eta_k^2 \sigma_k^2$. In particular, when the local dataset has the same size (i.e., $\eta_k = \frac{1}{n}$), we have
\begin{align}
E[\|\widehat{\mathbf{w}}_t - \mathbf{w}_\ast\|_2^2] & \leq  (1-\alpha\mu)^{2t}\zeta   + \frac{\alpha\overline{\sigma}^2}{n\mu}  \frac{1 - (1-\alpha\mu)^{2q}}{1 -  e^{-c}(1-\alpha\mu)^{2q}} \nonumber\\
& \overset{(a)}{\leq}  (1-\alpha\mu)^{2t}\zeta   + \frac{\alpha\overline{\sigma}^2}{n\mu}  \nonumber ,
\label{fl:eq208}
\end{align} 
where $\overline{\sigma}^2 = \frac{1}{n}\sum\limits_{k=1}^n \sigma_k^2$ and (a) is because $\frac{1 - (1-\alpha\mu)^{2q}}{1 -  e^{-c}(1-\alpha\mu)^{2q}}\leq 1$.
When $t$ is large and $\alpha$ is fixed, this MSE bound decays and converges to $\frac{\alpha\overline{\sigma}^2}{n\mu}$, which means that the quality of the SAFL solution improves with the number of participating devices $n$.

\end{remark}

\begin{remark}
\label{fl:rmk001}
While we use the convexity assumption \textbf{A2} to facilitate our analysis, our main result can be readily extended to the case where $F_k(\mathbf{w})$ is not necessarily a strong convex function. 
For example, we consider the non-negative function $F_k(\mathbf{w})$ satisfying
\begin{itemize}
\item[\textbf{A4}] $F_k(\mathbf{w})$ is $\mu$-strongly quasi-convex:
\begin{eqnarray}
<\nabla F_k(\mathbf{w}),\mathbf{w} - \mathbf{w}_\ast> & \geq &  \frac{\mu}{2}\|\mathbf{w} - \mathbf{w}_\ast\|^2_2 .
\end{eqnarray}
\item[\textbf{A5}] The stochastic gradient $\nabla F_k(\mathbff{w}{t}{k};A_{t}^{(i_k)} )$ has a bounded variance:
\begin{eqnarray}
E[\|\nabla F_k(\mathbff{w}{t-1}{k};A_{t}^{(i_k)})\|_2^2] & \leq & \sigma_k^2 .
\end{eqnarray}
\end{itemize}
Note that \textbf{A4} is weaker than \textbf{A2} since if \textbf{A2} holds true, then we have
\begin{eqnarray}
 <\nabla F_k(\mathbf{w}),\mathbf{w} - \mathbf{w}_\ast> 
& \overset{(a)}{=} & F_k(\mathbf{w}) - F_k(\mathbf{w}_\ast) + \frac{1}{2}(\mathbf{w}-\mathbf{w}_\ast)^T\nabla^2F_k(\boldsymbol\xi)(\mathbf{w}-\mathbf{w}_\ast)\nonumber \\ 
& \overset{(b)}{\geq} & \frac{1}{2}(\mathbf{w}-\mathbf{w}_\ast)^T\nabla^2F_k(\boldsymbol\xi)(\mathbf{w}-\mathbf{w}_\ast)\nonumber \\
& \overset{(c)}{\geq} &  \frac{\mu}{2}\|\mathbf{w}_\ast - \mathbf{w}\|^2_2, \nonumber
\end{eqnarray}  
where $\boldsymbol\xi$ is a point between $\mathbf{w}$ and $\mathbf{w}_\ast$, (a) is due to Taylor's expansion, (b) is because $F_k(\mathbf{w}) - F_k(\mathbf{w}_\ast)\geq 0$, and (c) is because $\nabla^2 F_k(\boldsymbol\xi) \succeq \mu \mathbf{I}$.
We also note that \textbf{A4} does not imply \textbf{A2}, meaning that the quasi-strong convexity does not imply the convexity of $F_k(\mathbf{w})$~\cite{gower2019}. For a complete review of the functional classes satisfying this condition, see~\cite{necoara2019}.
Interestingly, using \textbf{A4} instead of \textbf{A2}, one can show that the proposed SAFL still has the same convergence rate $\mathcal{O}(\frac{1}{t})$.
\begin{theorem}
Under \textbf{A1}, \textbf{A4}, and \textbf{A5}, if $\alpha_t = \frac{\alpha_0}{t+1}$ for some $\alpha_0$  satisfying $ \alpha_0 > \frac{1}{\mu}$, the MSE bound of SAFL satisfies
\begin{equation}
E[\sum\limits_k\eta_k \|\mathbf{w}^{(k)}_t - \mathbf{w}_\ast\|_2^2] \leq \frac{c}{t+1},
\label{fl:eq214}
\end{equation}
where $c = \max\{\frac{\alpha_0^2  \sum_k\eta_k\sigma_k^2 }{\mu\alpha_0-1},E[\sum\limits_k\eta_k \|\mathbf{w}^{(k)}_0 - \mathbf{w}_\ast\|_2^2]\}$. 
\label{fl:thm003}
\end{theorem}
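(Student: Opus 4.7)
The plan is to introduce $V_{t} = E\!\left[\sum_{k}\eta_{k}\|\mathbf{w}_{t}^{(k)}-\mathbf{w}_{\ast}\|_{2}^{2}\right]$ and establish, as an intermediate step, the one-step recursion
\begin{equation}
    V_{t} \leq (1-\alpha_{t}\mu)\,V_{t-1} + \alpha_{t}^{2}\sum_{k}\eta_{k}\sigma_{k}^{2},
    \label{fl:recursion}
\end{equation}
after which the bound \eqref{fl:eq214} follows by a routine induction on $t$ using $\alpha_{t}=\alpha_{0}/(t+1)$. Mirroring the calculation used for Corollary \ref{fl:col001}, once \eqref{fl:recursion} is in hand, the base case $V_{0}\leq c$ is immediate from the definition of $c$, and the inductive step $V_{t-1}\leq c/t \Rightarrow V_{t}\leq c/(t+1)$ reduces to the inequality $\alpha_{0}^{2}\sum_{k}\eta_{k}\sigma_{k}^{2}\leq c\,(\mu\alpha_{0}-1)$, which holds by the choice of $c$ and requires $\mu\alpha_{0}>1$.

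To prove \eqref{fl:recursion}, I would split the argument into two pieces. First, decouple the SA-based fusion step \eqref{eq:local update analysis} from the SGD step \eqref{eq:local training formula} by showing that the probabilistic aggregation does not inflate the weighted MSE:
\begin{equation}
    E\!\left[\sum_{k}\eta_{k}\|\mathbf{w}_{t}^{(k)}-\mathbf{w}_{\ast}\|_{2}^{2}\,\bigg|\,\{\mathbf{z}_{t}^{(k)}\}_{k}\right] \leq \sum_{k}\eta_{k}\|\mathbf{z}_{t}^{(k)}-\mathbf{w}_{\ast}\|_{2}^{2}.
    \label{fl:fusionbound}
\end{equation}
This step is the main obstacle, because it must cope with both the random coordinate-wise coefficients $\mathbf{u}_{t}^{(k)}$ and the convex-combination structure of $\bar{\mathbf{z}}_{t}=\sum_{k}\eta_{k}\mathbf{z}_{t}^{(k)}$. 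The natural route is to observe that each entry $u_{j}$ lies in $[0,1]$ (assuming $\epsilon\in[0,1]$), so that coordinatewise Jensen gives $(u_{j}a+(1-u_{j})b-c)^{2}\leq u_{j}(a-c)^{2}+(1-u_{j})(b-c)^{2}$; summing over coordinates and then taking expectations with $\bar{\epsilon}=E[u_{j}]=1-p(1-\epsilon)$ yields $E\|\mathbf{w}_{t}^{(k)}-\mathbf{w}_{\ast}\|_{2}^{2}\leq \bar{\epsilon}\|\bar{\mathbf{z}}_{t}-\mathbf{w}_{\ast}\|_{2}^{2}+(1-\bar{\epsilon})\|\mathbf{z}_{t}^{(k)}-\mathbf{w}_{\ast}\|_{2}^{2}$. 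Applying Jensen once more to $\|\bar{\mathbf{z}}_{t}-\mathbf{w}_{\ast}\|_{2}^{2}\leq \sum_{k'}\eta_{k'}\|\mathbf{z}_{t}^{(k')}-\mathbf{w}_{\ast}\|_{2}^{2}$ and weighting with $\eta_{k}$ then collapses the right-hand side to exactly $\sum_{k}\eta_{k}\|\mathbf{z}_{t}^{(k)}-\mathbf{w}_{\ast}\|_{2}^{2}$, delivering \eqref{fl:fusionbound}. The case $\delta_{t}=0$ is trivial since $\mathbf{w}_{t}^{(k)}=\mathbf{z}_{t}^{(k)}$.

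Second, apply the standard SGD analysis to $\|\mathbf{z}_{t}^{(k)}-\mathbf{w}_{\ast}\|_{2}^{2}$. Expanding
\begin{equation*}
    \|\mathbf{z}_{t}^{(k)}-\mathbf{w}_{\ast}\|_{2}^{2} = \|\mathbf{w}_{t-1}^{(k)}-\mathbf{w}_{\ast}\|_{2}^{2} - 2\alpha_{t}\langle \nabla F_{k}(\mathbf{w}_{t-1}^{(k)};A_{t}^{(i_{k})}),\,\mathbf{w}_{t-1}^{(k)}-\mathbf{w}_{\ast}\rangle + \alpha_{t}^{2}\|\nabla F_{k}(\mathbf{w}_{t-1}^{(k)};A_{t}^{(i_{k})})\|_{2}^{2},
\end{equation*}
conditioning on $\mathbf{w}_{t-1}^{(k)}$, and using unbiasedness of $A_{t}^{(i_{k})}$, assumption \textbf{A4} on the cross term, and assumption \textbf{A5} on the squared gradient, gives $E\|\mathbf{z}_{t}^{(k)}-\mathbf{w}_{\ast}\|_{2}^{2}\leq (1-\alpha_{t}\mu)\|\mathbf{w}_{t-1}^{(k)}-\mathbf{w}_{\ast}\|_{2}^{2}+\alpha_{t}^{2}\sigma_{k}^{2}$. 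Combining this pointwise estimate with the fusion bound \eqref{fl:fusionbound} and summing with weights $\eta_{k}$ yields \eqref{fl:recursion}, after which the induction described in the first paragraph completes the proof.
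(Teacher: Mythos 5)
Your proposal is correct and follows essentially the same route as the paper's proof in Appendix F: first show that the probabilistic fusion step does not increase the weighted MSE (reducing $E[\sum_k\eta_k\|\mathbf{w}_t^{(k)}-\mathbf{w}_\ast\|_2^2]$ to $\sum_k\eta_k\|\mathbf{z}_t^{(k)}-\mathbf{w}_\ast\|_2^2$), then apply the standard one-step SGD bound under \textbf{A4} and \textbf{A5} to get the recursion $V_t\leq(1-\alpha_t\mu)V_{t-1}+\alpha_t^2\sum_k\eta_k\sigma_k^2$, and finish by induction exactly as in Corollary \ref{fl:col001}. The only cosmetic difference is that you obtain the fusion bound via coordinatewise Jensen convexity (using $u_j\in[\epsilon,1]\subseteq[0,1]$) rather than the paper's explicit inner-product expansion with the identity $<\mathbf{a}-2\mathbf{b},\mathbf{a}>=\|\mathbf{a}-\mathbf{b}\|_2^2-\|\mathbf{b}\|_2^2$; both rest on $\epsilon\leq 1$ and yield the same intermediate inequality.
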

\begin{proof}
See Appendix \ref{fl:apxAB}.
\end{proof}
\end{remark}

\begin{remark}
In Theorem \ref{fl:thm002}, we put no constraint on the number of local iterations $q$ (a.k.a., the synchronization interval~\cite{yu2019}). Therefore, the communication rounds required for $T$ iterations is $\mathcal{O}(Tq^{-1})$, which is comparable to the latest results of existing distributed SGD techniques~\cite{yu2019}. 
\end{remark}

We are now ready to prove Theorem \ref{fl:thm002}.

\begin{prooff}
In our proof, we first show that the bound of $E[\|\widehat{\mathbf{w}}_t - \mathbf{w}_\ast\|_2^2]$ is expressed in terms of $E[\| \mathbff{w}{t}{k} - \mathbf{w}_\ast\|_2^2]$ and $\|E[ \mathbff{w}{t}{k} - \mathbf{w}_\ast]\|_2^2$. We then build the upper bounds for each of these. 
That is, 
\begin{align}
 E[\|\widehat{\mathbf{w}}_t - \mathbf{w}_\ast\|_2^2]  
& =  E[\|\sum_k\eta_k\mathbff{w}{t}{k} - \mathbf{w}_\ast\|_2^2]\notag\\
& =  \sum\limits_k \eta_k^2 E[\| \mathbff{w}{t}{k} - \mathbf{w}_\ast\|_2^2] \notag\\
& + \sum\limits_i\sum\limits_{j\neq i} \eta_i\eta_j E[<\mathbff{w}{t}{i} - \mathbf{w}_\ast,\mathbff{w}{t}{j} - \mathbf{w}_\ast>]\notag\\
& \overset{(a)}{\leq}  \sum\limits_k \eta_k^2 E[\| \mathbff{w}{t}{k} - \mathbf{w}_\ast\|_2^2] \notag \\
& + \sum\limits_i\sum\limits_{j\neq i} \eta_i\eta_j \|E[\mathbff{w}{t}{i} - \mathbf{w}_\ast]\|_2\|E[\mathbff{w}{t}{j} - \mathbf{w}_\ast]\|_2 \notag\\
& \overset{(b)}{=}  (\sum\limits_k \eta_k^2) \max\limits_k E[\| \mathbff{w}{t}{k} - \mathbf{w}_\ast\|_2^2] \notag\\
& + (1 - \sum\limits_k \eta_k^2)\max\limits_k \|E[ \mathbff{w}{t}{k} - \mathbf{w}_\ast]\|_2^2 ,
\label{fl:eq205}
\end{align}
where (a) is from the Cauchy-Schwarz inequality and (b) is because $\sum_i\sum_{j\neq i} \eta_i\eta_j = (\sum_k\eta_k)^2 - \sum_k\eta_k^2 = 1 - \sum_k\eta_k^2$.

In the following lemmas, we provide the upper bounds of $E[\| \mathbff{w}{t}{k} - \mathbf{w}_\ast\|_2^2]$ and $\|E[ \mathbff{w}{t}{k} - \mathbf{w}_\ast]\|_2^2$.

\begin{lemma}
Under the same conditions of Theorem \ref{fl:thm002}, we have
\begin{align}
E[\|\mathbff{w}{t}{k} - \mathbf{w}_\ast \|_2^2]   & \leq  
(1-\alpha\mu)^{2t}e^{-c\lfloor\frac{t}{q} \rfloor}  \zeta_0  
 +  \frac{\alpha^2\sigma_k^2}{1 - (1-\alpha\mu)^{2}} \frac{1 - (1-\alpha\mu)^{2q}}{1 -  e^{-c}(1-\alpha\mu)^{2q}}  ,
\label{fl:eq037}
\end{align}
where $\zeta_0 = \max\limits_i E[\|\mathbff{w}{0}{k} - \mathbf{w}_\ast\|_2^2]$.
\label{fl:lm002}
\end{lemma}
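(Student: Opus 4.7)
The plan is to derive a one-step recursion for $E[\|\mathbf{w}_t^{(k)} - \mathbf{w}_\ast\|_2^2]$ that treats local-SGD iterations ($\delta_t = 0$) and server-fusion iterations ($\delta_t = 1$) separately, and then unroll it over the $t$ steps. For a pure local-SGD step, I would invoke \textbf{A1} (which yields $\nabla F_k(\mathbf{w}_\ast)=0$) together with the mean-value form of Taylor's expansion to write $\nabla F_k(\mathbf{w}_{t-1}^{(k)}) = \nabla^2 F_k(\boldsymbol{\xi})(\mathbf{w}_{t-1}^{(k)} - \mathbf{w}_\ast)$. Substituting into \eqref{eq:local training formula}, the deterministic component of $\mathbf{z}_t^{(k)} - \mathbf{w}_\ast$ contracts by $\|\mathbf{I} - \alpha \nabla^2 F_k(\boldsymbol{\xi})\|_2 \leq 1 - \alpha\mu$ (using \textbf{A2} together with $\alpha < 1/(2\lambda-\mu)$, which in particular forces $\alpha<1/\lambda$ so all eigenvalues of $\mathbf{I}-\alpha\nabla^2 F_k(\boldsymbol{\xi})$ are positive), while the zero-mean stochastic component contributes a variance term bounded by $\alpha^2 \sigma_k^2$ via \textbf{A3}. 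Taking total expectation yields $E[\|\mathbf{z}_t^{(k)} - \mathbf{w}_\ast\|_2^2] \leq (1-\alpha\mu)^2 E[\|\mathbf{w}_{t-1}^{(k)} - \mathbf{w}_\ast\|_2^2] + \alpha^2 \sigma_k^2$.

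For a server-fusion step, I would write $\mathbf{w}_t^{(k)} - \mathbf{w}_\ast = \mathbf{u}_t^{(k)} \odot (\bar{\mathbf{z}}_t - \mathbf{w}_\ast) + (\mathbf{1} - \mathbf{u}_t^{(k)}) \odot (\mathbf{z}_t^{(k)} - \mathbf{w}_\ast)$, expand the squared norm, and take expectation over the independent mask $\mathbf{u}_t^{(k)}$ using $E[u_j^2] = 1 - p(1-\epsilon^2)$, which is the source of the factor $\gamma = 1-p(1-\epsilon^2)$ in $c$. I would then bound $\|\bar{\mathbf{z}}_t - \mathbf{w}_\ast\|_2^2$ by combining Jensen's inequality over the $\eta_i$-weighted devices with a Taylor-based spectral bound delivering the factor $\kappa = (1-\alpha(2\lambda-\mu))/(1-\alpha\mu)$, so that the net one-step contraction at a fusion step becomes $(1-\alpha\mu)^2 (1-\gamma\kappa^2) \leq (1-\alpha\mu)^2 e^{-c}$ via $1-x\leq e^{-x}$.

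Merging the two cases produces a unified recursion $a_t \leq (1-\alpha\mu)^2 e^{-c \mathbf{1}\{\delta_t=1\}} a_{t-1} + \alpha^2\sigma_k^2$. Iterating across $t$ steps that contain $\lfloor t/q\rfloor$ fusion events (one at the end of each local-epoch block of length $q$) telescopes the homogeneous part to $(1-\alpha\mu)^{2t}\,e^{-c\lfloor t/q\rfloor}\,\zeta_0$. The accumulated noise then factors into a within-block geometric sum $\sum_{j=0}^{q-1}(1-\alpha\mu)^{2j} = (1-(1-\alpha\mu)^{2q})/(1-(1-\alpha\mu)^2)$ and an across-block geometric sum $\sum_{m\geq 0}(e^{-c}(1-\alpha\mu)^{2q})^m = 1/(1-e^{-c}(1-\alpha\mu)^{2q})$, exactly reproducing the noise term in \eqref{fl:eq037}.

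The main obstacle is the fusion-step analysis. Simply reusing the spectral bound $\|\mathbf{I} - \alpha \nabla^2 F_k(\boldsymbol{\xi})\|_2 \leq 1-\alpha\mu$ as in the SGD step would recover only the trivial $c=0$ contraction. Extracting the nontrivial $e^{-c}$ factor requires simultaneously keeping (i) the averaging over independent local updates that sharpens the per-coordinate contraction for $\bar{\mathbf{z}}_t$ to $\kappa$, and (ii) the extra coordinate-wise randomness of the mask $\mathbf{u}_t^{(k)}$ via $E[u_j^2]=1-p(1-\epsilon^2)$, then aligning the two so their product is exactly $c = \gamma\kappa^2$. Controlling the cross-term between $\bar{\mathbf{z}}_t$ and $\mathbf{z}_t^{(k)}$ without destroying this improvement is the delicate bookkeeping step on which the whole argument hinges.
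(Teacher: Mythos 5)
Your overall architecture matches the paper's: a one-step recursion with contraction $(1-\alpha\mu)^2$ at pure SGD steps and $(1-\alpha\mu)^2e^{-c}$ at fusion steps, the variance term $\alpha^2\sigma_k^2$ from \textbf{A3}, and an unrolling into the within-block and across-block geometric sums that reproduces \eqref{fl:eq037} exactly. The local-SGD half of your argument is essentially the paper's. But there is a genuine gap at the step you yourself flag as the crux: how the $e^{-c}$ factor is extracted at a fusion step. Your proposed mechanism --- decomposing $\mathbf{w}_{t}^{(k)} - \mathbf{w}_\ast$ around $\mathbf{w}_\ast$, then using Jensen's inequality over the $\eta_i$-weighted devices to argue that $\bar{\mathbf{z}}_t$ contracts toward $\mathbf{w}_\ast$ by an extra factor $\kappa=(1-\alpha(2\lambda-\mu))/(1-\alpha\mu)$ --- does not work as stated. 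Jensen only gives $\|\bar{\mathbf{z}}_t-\mathbf{w}_\ast\|_2^2\le\sum_i\eta_i\|\mathbf{z}_t^{(i)}-\mathbf{w}_\ast\|_2^2$, which relates the average's error to the \emph{other} devices' errors; it supplies no factor smaller than one relative to device $k$'s own error, and indeed the paper's whole premise is that the average need not be closer to $\mathbf{w}_\ast$. Without a working replacement, your recursion collapses to the trivial $c=0$ case.

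The paper obtains the improvement by a different decomposition: it keeps the error anchored at $\mathbf{z}_t^{(k)}$, writing $\mathbf{e}_t^{(k)}=\mathbf{A}\mathbf{e}_{t-1}^{(k)}-\delta_t\mathrm{Diag}(\mathbf{u}_t^{(k)})\Delta\mathbf{z}_t^{(k)}+(\text{noise})$ with $\Delta\mathbf{z}_t^{(k)}=\mathbf{z}_t^{(k)}-\bar{\mathbf{z}}_t$, and isolates the quantity $\Omega=2(1-p(1-\epsilon))(\Delta\mathbf{z}_t^{(k)})^T\mathbf{A}\mathbf{e}_{t-1}^{(k)}-(1-p(1-\epsilon^2))\|\Delta\mathbf{z}_t^{(k)}\|_2^2$. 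The factor $(1-\alpha(2\lambda-\mu))^2$ then comes from lower-bounding the normalized cross term by $\frac{1}{2}(2\lambda_{\min}(\mathbf{A})-\lambda_{\max}(\mathbf{A}))\ge\frac{1}{2}(1-\alpha(2\lambda-\mu))$ --- an eigenvalue argument on the cross term, not a sharpened contraction of $\bar{\mathbf{z}}_t$. Note also that $E[u_j]=1-p(1-\epsilon)$ and $E[u_j^2]=1-p(1-\epsilon^2)$ enter with \emph{different} coefficients on the linear and quadratic pieces of $\Omega$, and the resulting lower bound $E[\Omega]\ge(1-p(1-\epsilon^2))(1-\alpha(2\lambda-\mu))^2E[\|\mathbf{e}_{t-1}^{(k)}\|_2^2]$ holds only after $\epsilon$ and $p$ are chosen to make a certain quadratic $g(\epsilon,p)$ nonnegative (Appendix~\ref{fl:apxB}, conditions \eqref{fl:eq029}--\eqref{fl:eq030}); this is why the theorem reads ``for some $p$ and $\epsilon$.'' Your sketch treats the fusion-step contraction as holding for generic $(\epsilon,p)$ and omits this conditioning, which is an essential part of the claim.
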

\begin{proof}
See Appendix \ref{fl:apxA}.
\end{proof}

\begin{lemma}
Under the same conditions of Theorem \ref{fl:thm002}, we have
\begin{equation}
\|E[\mathbf{w}^{(k)}_t - \mathbf{w}_\ast]\|_2 \leq (1-\alpha\mu)^{t} \sqrt{\zeta}  ,
\label{fl:eq204} 
\end{equation} 
where $\zeta = \max\limits_k E[\|\mathbff{w}{0}{k} - \mathbf{w}_\ast\|_2^2]$.
\label{fl:lm003} 
\end{lemma}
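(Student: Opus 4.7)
The plan is to show that the bias $\|E[\mathbf{w}_{t}^{(k)}]-\mathbf{w}_\ast\|_2$ contracts by a factor of $(1-\alpha\mu)$ per iteration, so that iterating gives the stated rate. The key observation I would exploit is that both the SGD sampling noise (by \textbf{A3}) and the SA perturbation noise (by independence of $\mathbf{u}_t^{(k)}$) have zero mean, so they vanish once we take the outer expectation.

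First, I would apply expectation to the local update \eqref{eq:local update analysis}. Because $\mathbf{u}_t^{(k)}$ is drawn independently of $\bar{\mathbf{z}}_t$ and $\mathbf{z}_t^{(k)}$, with $E[u_{t,j}^{(k)}] = \beta := 1-p(1-\epsilon)\in[0,1]$, linearity of expectation on each coordinate yields
\[
E[\mathbf{w}_t^{(k)}-\mathbf{w}_\ast] = \delta_t\beta\, E[\bar{\mathbf{z}}_t-\mathbf{w}_\ast] + (1-\delta_t\beta)\, E[\mathbf{z}_t^{(k)}-\mathbf{w}_\ast].
\]
Since this is a convex combination in $[0,1]$ and $\bar{\mathbf{z}}_t=\sum_j\eta_j\mathbf{z}_t^{(j)}$ with $\sum_j\eta_j=1$, the triangle inequality collapses the right-hand side to $\max_k\|E[\mathbf{w}_t^{(k)}-\mathbf{w}_\ast]\|_2 \le \max_j\|E[\mathbf{z}_t^{(j)}-\mathbf{w}_\ast]\|_2$, removing the SA randomness from the analysis.

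Next I would handle $\|E[\mathbf{z}_t^{(j)}-\mathbf{w}_\ast]\|_2$. Using conditional SGD unbiasedness, $E[\mathbf{z}_t^{(k)}\mid\mathbf{w}_{t-1}^{(k)}] = \mathbf{w}_{t-1}^{(k)} - \alpha\nabla F_k(\mathbf{w}_{t-1}^{(k)})$. Then, expressing $\nabla F_k(\mathbf{w}) = \mathbf{H}_k(\mathbf{w})(\mathbf{w}-\mathbf{w}_\ast)$ with $\mathbf{H}_k(\mathbf{w}) := \int_0^1 \nabla^2 F_k(\mathbf{w}_\ast+s(\mathbf{w}-\mathbf{w}_\ast))\,ds$ (whose eigenvalues lie in $[\mu,\lambda]$ by \textbf{A2}), the step size condition $\alpha<1/(2\lambda-\mu)$ gives $\|\mathbf{I}-\alpha\mathbf{H}_k(\mathbf{w})\|_2\le 1-\alpha\mu$ uniformly. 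Tower-expectation and Jensen's inequality then produce
\[
\|E[\mathbf{z}_t^{(k)}-\mathbf{w}_\ast]\|_2 = \bigl\|E\bigl[(\mathbf{I}-\alpha\mathbf{H}_k(\mathbf{w}_{t-1}^{(k)}))(\mathbf{w}_{t-1}^{(k)}-\mathbf{w}_\ast)\bigr]\bigr\|_2 \le (1-\alpha\mu)\, E[\|\mathbf{w}_{t-1}^{(k)}-\mathbf{w}_\ast\|_2].
\]
Induction on $t$ then delivers the claim: at $t=0$, Jensen gives $E[\|\mathbf{w}_0^{(k)}-\mathbf{w}_\ast\|_2]\le\sqrt{E[\|\mathbf{w}_0^{(k)}-\mathbf{w}_\ast\|_2^2]}\le\sqrt{\zeta}$, and the inductive step repeats the Hadamard-convex-combination bound together with the Hessian-based contraction, now applied directly to $\|\mathbf{w}_{t-1}^{(k)}-\mathbf{w}_\ast\|_2$ itself, to maintain $E[\|\mathbf{w}_t^{(k)}-\mathbf{w}_\ast\|_2]\le (1-\alpha\mu)^t\sqrt{\zeta}$. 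The final step $\|E[\mathbf{w}_t^{(k)}-\mathbf{w}_\ast]\|_2\le E[\|\mathbf{w}_t^{(k)}-\mathbf{w}_\ast\|_2]$ (Jensen) closes the argument.

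The main obstacle is that the contraction inequality naturally mixes $\|E[\cdot]\|_2$ on the left with $E[\|\cdot\|_2]$ on the right, so the recursion cannot be iterated purely at the level of the bias. Standard SGD bounds on $E[\|\cdot\|_2]$ leave a residual noise floor proportional to $\alpha\sigma_k$, which would spoil the clean rate. The subtle point I expect to handle carefully is ensuring that, when expectations are taken over both the SGD sample and the SA perturbation in the \emph{same} iteration, the zero-mean gradient noise is annihilated so that the $E[\|\cdot\|_2]$ recursion inherits only the deterministic gradient-descent contraction factor, with the initial variance absorbed once and for all into $\sqrt{\zeta}$ at the base case.
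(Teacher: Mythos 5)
Your setup matches the paper's: averaging out $\mathbf{u}_t^{(k)}$ to get a convex combination, collapsing $\bar{\mathbf{z}}_t$ via $\sum_j\eta_j=1$ to a max over devices, and using a mean-value/Hessian representation with eigenvalues in $[\mu,\lambda]$ so that $\|\mathbf{I}-\alpha\mathbf{H}_k\|_2\le 1-\alpha\mu$, with the SGD noise killed by conditional unbiasedness. However, there is a genuine gap at the point you yourself flag and then defer. Your one-step bound lands on $E[\|\mathbf{w}_{t-1}^{(k)}-\mathbf{w}_\ast\|_2]$ (expectation of the norm) while the quantity being controlled is $\|E[\cdot]\|_2$ (norm of the expectation), and your proposed fix is to run the induction on $E[\|\mathbf{w}_t^{(k)}-\mathbf{w}_\ast\|_2]$ while still "annihilating" the zero-mean gradient noise. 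That cannot work: once the norm is outside the expectation, a zero-mean noise term contributes $E[\|\text{noise}\|_2]>0$, which is exactly the $\mathcal{O}(\alpha\sigma_k)$ noise floor you mention. There is no "careful handling" that makes $E[\|X+N\|_2]\le E[\|X\|_2]$ for zero-mean $N$; conditional unbiasedness only helps inside an expectation, not inside a norm of a pointwise quantity. So the induction as described does not close, and the clean rate $(1-\alpha\mu)^t\sqrt{\zeta}$ with no variance term is not established.

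The paper's proof resolves this with a device you are missing: it never converts to $E[\|\cdot\|_2]$ during the recursion. It tracks $\|E[\mathbf{A}\,\mathbf{e}_t^{(k)}]\|_2$ for an arbitrary matrix $\mathbf{A}$ independent of the current randomness, shows $E[\mathbf{A}(\mathbf{z}_t^{(k)}-\mathbf{w}_\ast)]=E[\mathbf{A}\mathbf{G}_{t-1}^{(k)}\mathbf{e}_{t-1}^{(k)}]$ with $\mathbf{G}_{t-1}^{(k)}=\mathbf{I}-\alpha\nabla^2F_k(\boldsymbol\xi_{t-1}^{(k)})$ (the noise cancels here because the expectation is still outside), and then unrolls the recursion all the way to $t=0$, accumulating the product $\mathbf{G}_{t-1}^{(i_{t-1})}\cdots\mathbf{G}_{0}^{(i_0)}$ \emph{inside} a single expectation. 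Only then does it apply Jensen once, bound the operator norm of the product by $(1-\alpha\mu)^{t}$, and use $E[\|\mathbf{e}_0\|_2]\le\sqrt{\zeta}$. If you adopt this "carry the prefactor matrix through the recursion and apply Jensen only at the end" structure, your remaining ingredients assemble into a correct proof; as written, the inductive step is invalid.
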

\begin{proof}
See Appendix \ref{fl:apxD}.
\end{proof}
Finally, using \eqref{fl:eq205}, \eqref{fl:eq037}, and \eqref{fl:eq204}, we have
\begin{align}
 E[\|\widehat{\mathbf{w}}_t - \mathbf{w}_\ast\|_2^2] 
& \leq   \sum\limits_k \eta_k^2 (1-\alpha\mu)^{2t}e^{ - c\lfloor\frac{t}{q} \rfloor}  \zeta + (1-\sum\limits_k \eta_k^2)(1-\alpha\mu)^{2t}\zeta  \nonumber\\
&   + \sum\limits_k  \eta^2_k\sigma_k^2   \frac{\alpha^2}{1 - (1-\alpha\mu)^{2}} \frac{1 - (1-\alpha\mu)^{2q}}{1 -  e^{-c}(1-\alpha\mu)^{2q}}\nonumber \\
& =   (1 - \sum\limits_k \eta_k^2 (1 - e^{ - c\lfloor\frac{t}{q} \rfloor} )  )(1-\alpha\mu)^{2t}\zeta  \nonumber\\
&   + \sum\limits_k  \eta^2_k\sigma_k^2   \frac{\alpha^2}{1 - (1-\alpha\mu)^{2}} \frac{1 - (1-\alpha\mu)^{2q}}{1 -  e^{-c}(1-\alpha\mu)^{2q}}\nonumber \\
& \overset{(a)}{\leq}   (1-\alpha\mu)^{2t}\zeta  +    \frac{\sum\limits_k  \eta^2_k\sigma_k^2\alpha^2}{1 - (1-\alpha\mu)^{2}} \frac{1 - (1-\alpha\mu)^{2q}}{1 -  e^{-c}(1-\alpha\mu)^{2q}}\nonumber \\
& \overset{(b)}{\leq}   (1-\alpha\mu)^{2t}\zeta  +    \frac{\alpha}{\mu} \sum\limits_k  \eta^2_k\sigma_k^2  \frac{1 - (1-\alpha\mu)^{2q}}{1 -  e^{-c}(1-\alpha\mu)^{2q}}\nonumber ,
\end{align}
where (a) is because $1 - \sum\limits_k \eta_k^2 (1 - e^{ - c\lfloor\frac{t}{q} \rfloor} )  \leq 1$ and (b) is because $1 - (1-\alpha\mu)^{2} = \alpha\mu(2-\alpha\mu) \geq \alpha\mu$, which establishes Theorem \ref{fl:thm002}.
\end{prooff}

\section{Extended SAFL for The Overfitting Problem}
\label{sec:Extension of SAFL_Outliers}

\begin{figure}[t]
\centering
\begin{adjustbox}{max size={!}{!}}
\begin{tikzpicture}
\node (c1) {\includegraphics[scale=0.45]{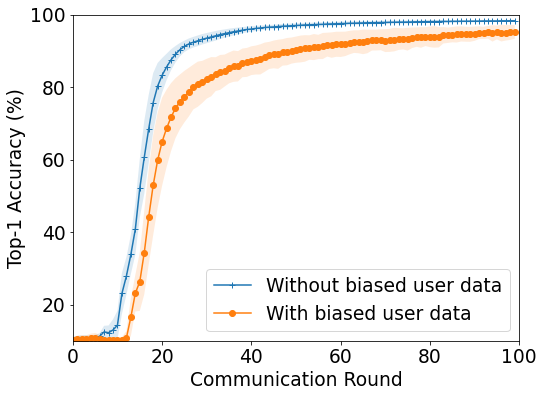}};
%
%
\end{tikzpicture}
\end{adjustbox}
\caption{The performance of the FL network with and without biased user data.}
\label{fl:fig004}
\end{figure}

In many practical scenarios, the FL performance can be degraded due to various reasons such as biased user data, training failures, model poisoning attack, and adversarial attacks~\cite{yang2019,wang2019,sattler2019,wanghan2019,li2020federated,bagdasaryan2018}.
For example, when a user device trains its local model using non-representative data (i.e., certain elements in the dataset are more heavily weighted and represented than others), then the local update of the device might cause a model overfitting problem, resulting in the degradation of the entire FL network performance.
To illustrate this behavior, we consider a FL network performing the MNIST classification (see Section~\ref{fl:sec002} for the detailed setting of the FL network).
Depending on the number of digit labels in the local datasets, devices can be classified into two groups: 1) a group with local dataset containing multiple digits (say, 1, 2, 5, 7, and 9) and 2) a group with dataset containing only one digit (say, 2).
Due to the data bias, devices in the second group can only learn features of one digit and, as a result, locally trained models might fail to predict other digits (1, 5, 7, and 9). In fact, when the locally trained models of the second group are overfitted to the biased dataset, there would be a performance degradation in the global evaluation model. In our example, if the server uses the local updates of the second group in the update of the global model, then the accuracy of the global model is degraded significantly (see Fig.~\ref{fl:fig004}).

\begin{table}[t!]
\label{fl:tab004}
\centering
{
\begin{tabular}{l}
\hline
\textbf{Algorithm 2:} Extended SAFL \\
\hline
\textbf{Input:} $T$: max iteration\\
\qquad\quad $E$: max local epoch\\
\qquad\quad $L$: maximum temperature\\
\qquad\quad $\{\eta_k\}_k$: weight coefficients\\
\qquad\quad $\{\mathbff{w}{0}{k}\}_k$: parameter initialization of the devices\\
\qquad\quad $s$: number of selected devices each round\\
\qquad\quad $t = 1$: initial iteration\\
\qquad\quad $\{q_0^{(k)}\}_k = 1$: initial probability of the local update\\
\hline
\textbf{While} $t < T$ and a stopping criterion is not met \textbf{do:}\\
\qquad \textbf{For} the server \textbf{do:}\\
\qquad\qquad \textbf{If} the server receives $\mathbf{z}_t^{(k)}$ from the devices \textbf{then do:} \\
\qquad\qquad\qquad $\overline{\mathbf{z}}_t = \sum\limits_{k=1}^{n} \eta_k \mathbf{z}_t^{(k)}$ \\
\qquad\qquad\qquad Select a random set of devices $S_t$ satisfying $|S_t| = s$  \\
\qquad\qquad\qquad Send $\overline{\mathbf{z}}_t$ to $S_t$ \\
\qquad\qquad \textbf{End If}\\
\qquad \textbf{End For}\\
\qquad \textbf{For} device $k\in S_t$ \textbf{in parallel do:}\\
\qquad\qquad \textbf{For} e = 1 to E \textbf{do:}\\
\qquad\qquad\qquad \textbf{For all} example $A_{t}^{(i_k)}$, $i_k\in\{1,2,\cdots,m_k\}$ \textbf{do:}\\
\qquad\qquad\qquad\qquad $\mathbff{z}{t}{k} =\mathbff{w}{t-1}{k} - \alpha\nabla F_k(\mathbff{w}{t-1}{k};A_{t}^{(i_k)})$ \\
\qquad\qquad\qquad\qquad \textbf{If} the device receives $\overline{\mathbf{z}}_t$ from the server \textbf{then do:} \\
\qquad\qquad\qquad\qquad\qquad Generate $\mathbff{u}{t}{k}$ using \eqref{eq:definition of u}\\
\qquad\qquad\qquad\qquad\qquad $\mathbff{w}{t}{k} =  \mathbff{u}{t}{k}\odot\overline{\mathbf{z}}_{t} + (\mathbf{1}- \mathbff{u}{t}{k})\odot\mathbff{z}{t}{k}$ \\
\qquad\qquad\qquad\qquad\qquad Compute $q_t^{(k)}$ using \eqref{fl:eq100}\\
\qquad\qquad\qquad\qquad \textbf{Else do:}  \\
\qquad\qquad\qquad\qquad\qquad $\mathbff{w}{t}{k} = \mathbff{z}{t}{k}$ \\
\qquad\qquad\qquad\qquad\qquad $q_t^{(k)} = q_{t-1}^{(k)}$ \\
\qquad\qquad\qquad\qquad \textbf{End If}  \\
\qquad\qquad\qquad\qquad $t = t + 1$\\
\qquad\qquad\qquad \textbf{End For}\\
\qquad\qquad \textbf{End For}\\
\qquad\qquad Send $\mathbf{z}_t^{(k)}$ to the server with probability $q_t^{(k)}$\\
\qquad \textbf{End For}\\
\textbf{End While}\\
\hline
\textbf{Output:} $\widehat{\mathbf{w}}_t = \sum_{k}\mathbf{w}^{(k)}_t$\\
\hline
\end{tabular}
}
\end{table}

In the above example, since the data distributions of devices are known as a priori, we can prevent the performance degradation of the global model by excluding local updates of the second group in the update of the global model. In general, however, it is very difficult for the server to exclude those biased local updates since the local datasets are not revealed to the server due to the privacy of the user data. Instead of making the server to exclude biased local updates, we modify SAFL such that each device can decide whether to upload its local update to the server or not. This decision is done by measuring the performance gap between global and local models. 
Let $h(\overline{\mathbf{z}}_{t-1})$ and $h(\mathbf{z}^{(k)}_t)$ be the accuracies of the global evaluation model and the local model in the $k$-th device, respectively, then the performance gap $\Delta_{t}^{(k)}$ between the global and local models is defined as
\begin{equation}
\Delta_t^{(k)} = \frac{|h(\overline{\mathbf{z}}_{t-1}) - h(\mathbf{z}^{(k)}_t)|}{h(\overline{\mathbf{z}}_{t-1}) + h(\mathbf{z}^{(k)}_t) +\epsilon}, 
\label{fl:eq101}
\end{equation}
where $\epsilon$ is a small constant to avoid division by zero (e.g., $\epsilon = 10^{-6}$).
If $\Delta_t^{(k)}$ is large, then we consider the local update as a biased update and do not upload the local update to the server. To do so, we set the probability $q_{t}^{(k)}$ that the local update $\mathbf{z}_{t}^{(k)}$ is uploaded to the server as\footnote{The choice of the exponential decay is based on our empirical experiences.}
\begin{equation}
q_t^{(k)} = \exp\left(-\frac{\Delta_t^{(k)}}{\nu}\right),
\label{fl:eq100}
\end{equation}
where $\nu$ is a regularization parameter.
Since the probability $q_t^{(k)}$ decays exponentially with the performance gap $\Delta_t^{(k)}$, if $\Delta_t^{(k)}$ is large, then it is highly likely that the device does not send its local update $\mathbf{z}_{t}^{(k)}$ to the server. By excluding the biased local update $\mathbf{z}_{t}^{(k)}$ in the update of the global model, we can prevent the performance degradation of the global model.

While the communication cost of SAFL is the same as that of FedAvg, the extended SAFL can reduce the number of local updates uploaded to the server. Let $X$ be the total local updates of $n$ devices after $T$ communication rounds and let $X_t^{(k)}$ be the random variable indicating whether the $k$-th device sends the local update to the server, i.e., $P(X_t^{(k)} = 1) = q_t^{(k)}$ and $P(X_t^{(k)} = 0) = 1-q_t^{(k)}$.
Then, we have $X  = \sum\limits_{t=1}^T \sum\limits_{k=1}^n X_t^{(k)}$ and thus
\begin{align}
E[X] & = \sum\limits_{t=1}^T \sum\limits_{k=1}^n \exp\left(-\frac{\Delta_t^{(k)}}{\nu}\right) \leq \sum\limits_{t=1}^T \sum\limits_{k=1}^n 1 = nT,
\end{align}
where $nT$ is the total local updates of FedAvg.

In Algorithm II, we summarize the extended SAFL algorithm.


\begin{table}[t!]
\centering
\caption{Deep Neural Networks}
\label{fl:tab002}
\begin{tabular}{|c|c|c|c|}
\hhline{|-|-|-|-|}
 \multicolumn{2}{|c|}{\cellcolor{gray!30} LeNet-5} & \multicolumn{2}{|c|}{ \cellcolor{gray!30} Light VGGNet}    \\
 \hhline{|-|-|-|-|}
  Layer & Filter Stride & Layer  &  Filter Stride\\
\hhline{|-|-|-|-|}
 conv5-6 &  1 		& conv3-64 			& 1 \\
 avg-pool-2 & 2 	& conv3-128  		& 1 \\
 conv5-16 & 1 		& max-pool-2  		& 2 \\
 avg-pool-2 & - 	& conv3-128  		& 1 \\
 FC-120 	& -		& max-pool-2  		& 2 \\
 FC-84 		& - 	& conv3-128  		& 1 \\
 FC-10  	& -		& max-pool-2  		& 2 \\
 softmax 	& -		& conv3-128  		& 1 \\
 - 			& -		& max-pool-2  		& 2 \\
 - 			& -		& global-avg-pool 	& - \\
 - 			& -		& conv1-10 			& 1 \\
 - 			& -		& softmax 			& - \\
\hhline{|-|-|-|-|}
\multicolumn{2}{|c|}{0.04M params}    & \multicolumn{2}{|c|}{1.76M params }	\\
\hhline{|-|-|-|-|}
\end{tabular}
\end{table}

\section{Simulation}
\label{fl:sec002}

\begin{figure*}[t]
\centering
\begin{adjustbox}{max size={!}{!}}
\begin{tikzpicture}
\node (c1) {\includegraphics[scale=0.277]{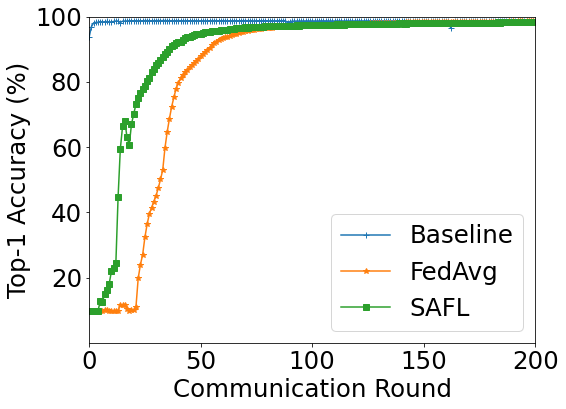}};

\node[right of=c1,xshift=4.5cm] (c2) {\includegraphics[scale=0.277]{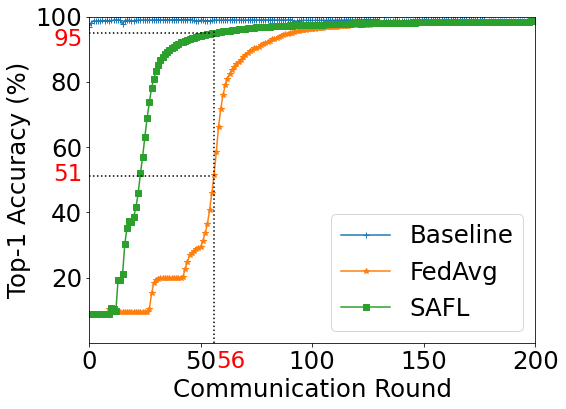}};

\node[right of=c2,xshift=4.5cm] (c3) {\includegraphics[scale=0.277]{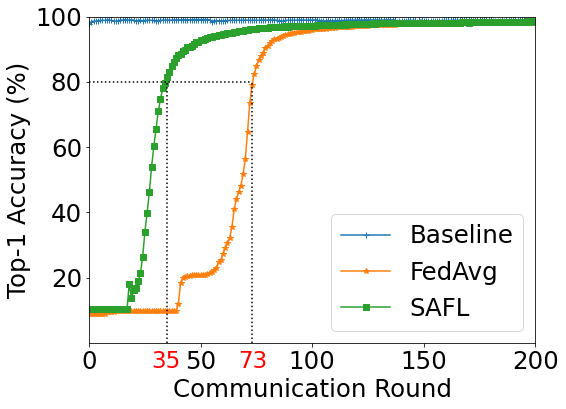}};

\node[below of=c1,yshift=-4cm] (c4) {\includegraphics[scale=0.277]{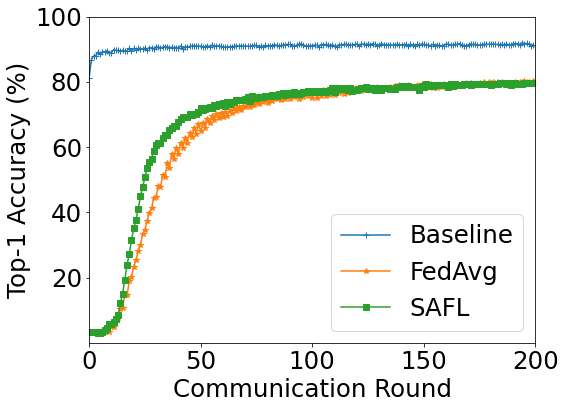}};

\node[right of=c4,xshift=4.5cm] (c5) {\includegraphics[scale=0.277]{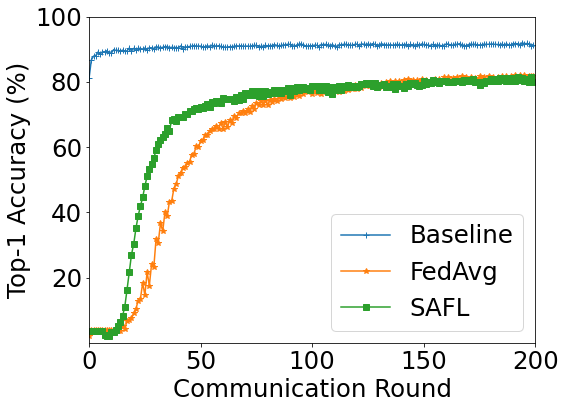}};

\node[right of=c5,xshift=4.5cm] (c6) {\includegraphics[scale=0.277]{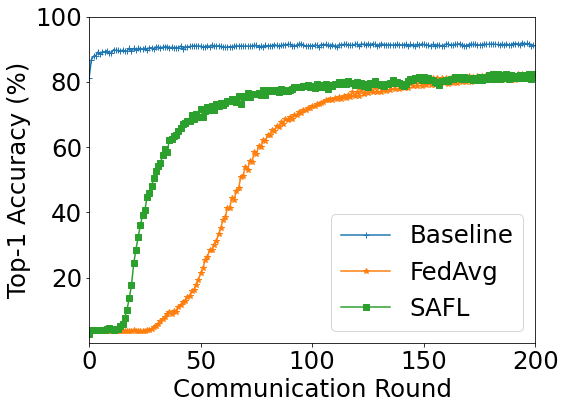}};

\node[below of=c1,yshift=3.1cm,xshift=0.5cm] (c11) {$n = 40$};
\node[below of=c2,yshift=3.1cm,xshift=0.5cm] (c21) {$n = 80$};
\node[below of=c3,yshift=3.1cm,xshift=0.5cm] (c31) {$n = 120$};

\node[below of=c2,yshift=-1.4cm,xshift=0.5cm] (c21) {(a)};
\node[below of=c5,yshift=-1.4cm,xshift=0.5cm] (c51) {(b)};

\end{tikzpicture}
\end{adjustbox}
\caption{Test accuracy of SAFL on different datasets: (a) MNIST and (b) GSC.}
\label{fl:fig003}
\end{figure*}

In this section, we investigate the empirical performance of the proposed SAFL on various benchmark datasets, which has been popularly used in the FL evaluation. 
We first summarize the datasets used in our experiments as follows:
\begin{itemize}
\item \textit{MNIST~\cite{lecunMNIST}:} a dataset consisting of $70,000$ images of handwritten digits between $0$ and $9$. All the images are divided into two groups: $60,000$ images for the training set and $10,000$ images for the test set\footnote{The training and the test sets are split by the command \textit{tf.keras.datasets.mnist.load\_data()} in Tensorflow.}.

\item \textit{Fashion-MNIST~\cite{xiao2017}:} a dataset containing 70,000 grayscale images of clothing (e.g., sneakers, shirts, shoes, and bags). These images are classified into 10 categories. The dataset is divided into two sets: the training set of 60,000 images and the test set of 10,000 images.  

\item \textit{CIFAR10~\cite{krizhevsky2009}:} a dataset of color images popularly used in image classification. It consists of 60,000 images of $32\times 32$ pixels from 10 categories: airplane, automobile, bird, cat, deer, dog, frog, horse, ship, and truck. The dataset is divided into two: training set of 50,000 images and test set of 10,000 images. 

\item \textit{Google speech commands dataset~\cite{warden2017}:} a dataset popularly used in speech recognition tasks. It consists of 65,000 utterances of 30 short words.
To pre-process the GSC dataset, we compute the first 13 mel-frequency cepstral coefficients (MFCC) of a speech signal using 80 filterbanks. To be specific, we first perform a 1024-point short-time Fourier transform (STFT) with frames of 64ms and 75\% overlap (at 16kHz sampling frequency) and then compute the power spectrum and MFCC.
  
\end{itemize}

\begin{table*}[t!]

\centering
\caption{Test accuracy in the 50-th communication round.}
{\normalsize
\begin{tabular}{c|c|c|c|c|c|c}
\hline
\multirow{2}{*}{Dataset} & \multicolumn{2}{c|}{Baseline} & \multicolumn{2}{c}{AvgFed} & \multicolumn{2}{|c}{SAFL}  \TBstrut \\
\cline{2-7}
& Top-1 Acc. & Top-5 Acc. & Top-1 Acc.  & Top-5 Acc. & Top-1 Acc.  & Top-5 Acc. \\
\hline
MNIST 		& 99\% & 99\% & 30\% & 90\% & 94\% & 99\% \\
FMNIST 		& 92\% & 99\% & 58\% & 98\% & 75\% & 99\% \\
CIFAR10 	& 77\% & 98\% & 21\% & 75\% & 35\% & 88\% \\
GSC 		& 91\% & 98\% & 62\% & 91\% & 72\% & 92\% \\
\hline
\end{tabular}
}
\label{fl:tab001}
\end{table*}

\begin{figure*}[t]
\centering
\begin{adjustbox}{max size={!}{!}}
\begin{tikzpicture}
\node (c1) {\includegraphics[scale=0.275]{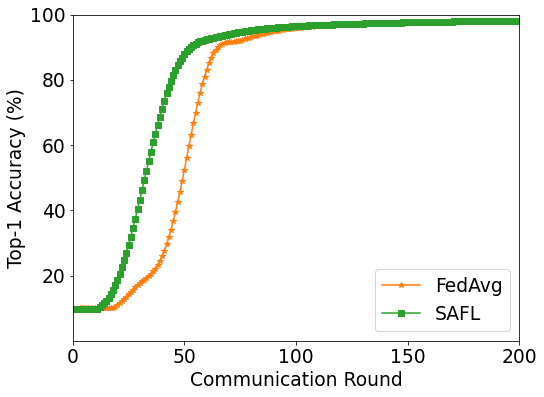}};
\node[below of=c1,yshift=-1.4cm,xshift=0cm] {(a)};

\node[right of=c1,xshift=4.5cm] (c2) {\includegraphics[scale=0.275]{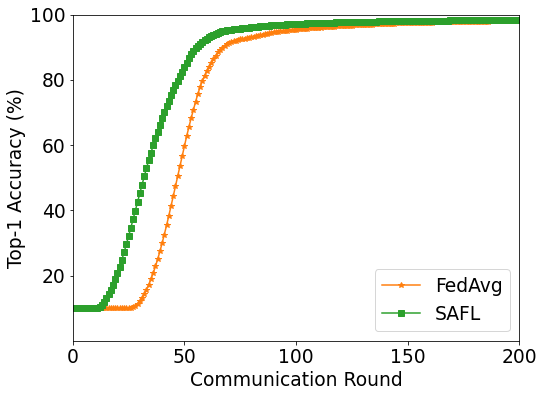}};
\node[below of=c2,yshift=-1.4cm,xshift=0cm] {(b)};

\node[right of=c2,xshift=4.5cm] (c3) {\includegraphics[scale=0.275]{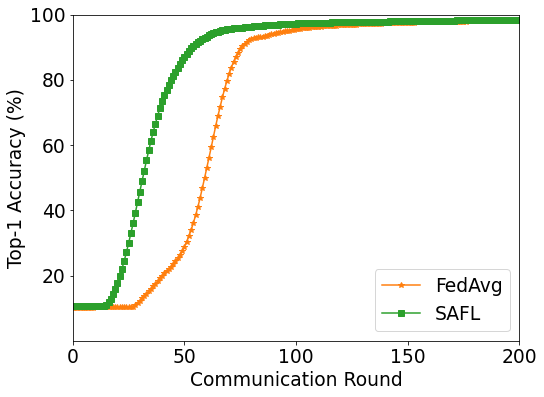}};

\node[below of=c3,yshift=-1.4cm,xshift=0cm] {(c)};

\end{tikzpicture}
\end{adjustbox}
\caption{Performance of SAFL for different fraction $R$ of selected devices: (a) $R = 0.3$, (b) $R = 0.5$, and (c) $R = 0.7$.}
\label{fl:fig103}
\end{figure*} 

\begin{table*}[t]

\centering
\caption{Effect of the parameters $\epsilon$ and $L$ in the proposed SAFL.}
{\normalsize
\begin{tabular}{c|c|c|c|c|c|c|c|c}
\hline
\multirow{2}{*}{$\epsilon$} & \multicolumn{2}{c|}{\shortstack{Baseline}} & \multicolumn{2}{c}{\shortstack{FedAvg}} & \multicolumn{2}{|c|}{\shortstack{SAFL \\ $L = 10$ }} & \multicolumn{2}{c}{\shortstack{SAFL \\ $L = 80$}} \TBstrut \\
\cline{2-9}
& \shortstack{Test \\ Cost} & \shortstack{Test Acc. } & \shortstack{Test \\ Cost}  & \shortstack{Test Acc. } & \shortstack{Test \\ Cost}  & \shortstack{Test Acc. } & \shortstack{Test \\ Cost}  & \shortstack{Test Acc. } \TBstrut \\
\hline
0.3   &  0.075   &  98.8\%  &   0.476  &   88.2\%  &   0.322  &   91.2\%  &    $\mathbf{0.186}$  &   $\mathbf{95.3}$\%\\
0.5   &  0.075   &  98.8\%   &  0.476   &  88.2\%  &   $\mathbf{0.206}$   &  $\mathbf{94.7}$\%  &   0.204  &   95.1\%\\
0.7    & 0.075    & 98.8\%   &  0.476   &  88.2\%  &   0.335   &  91.5\%  &   0.217   &  94.6\%\\
\hline
\end{tabular}
}
\label{fl:tab005}
\end{table*}

\begin{table*}[t!]
\caption{SAFL accuracy for different batch sizes.}
\centering
{\normalsize
\begin{tabular}{ c|c|c|c|c|c }
\hline
\multirow{2}{*}{Dataset size} & \multirow{2}{*}{FL Technique} & \multicolumn{4}{ c  }{Batch size} \\
\cline{3-6}
& & 50 & 100 & 150 & 200 \\
\hline
\multirow{2}{*}{Half of dataset} & FedAvg & 97.59\% & 97.15\% & 91.53\% & 93.03\%\\
& SAFL & \textbf{97.99\%} & \textbf{97.55\%} & \textbf{97.25\%} & \textbf{94.01\%}\\
\hline
\multirow{2}{*}{All of dataset} & FedAvg & 98.44\% & 97.42\% & 97.01\% & 95.74\%\\
& SAFL & \textbf{98.47\%} & \textbf{98.12\%}  & \textbf{97.67\%} & \textbf{97.08\%}\\
\hline
\end{tabular}
}
\label{fl:tab100}
\end{table*}

In our experiments, each training set is partitioned into $n$ subsets $(\mathcal{D}_1, \mathcal{D}_2,\cdots,\mathcal{D}_n)$ of user devices, and each of which is the local dataset in each device.
We consider the heterogeneous scenarios where $\mathcal{D}_k$ consists of just a few number of class labels (not all the labels) whose sizes are all different ($m_k = |\mathcal{D}_k|$).
To be specific, we first set $m_k = \max(\lfloor x_k \rfloor, 1)$ where $x_k\sim N(\overline{m}, \sigma^2)$ is a normal random variable with mean $\overline{m}$ and variance $\sigma^2$ and $\lfloor x_k \rfloor$ is the integer satisfying $\lfloor x_k \rfloor\leq x_k < \lfloor x_k \rfloor + 1$. 
Then we select a number of digits (e.g., at most 7 digits) for each device at random and choose $m_k$ samples randomly from the training subset containing only the selected digit labels.

For the MNIST classification, we use LeNet-5, a CNN model consisting of two sets of convolutional and pooling layers, followed by two fully-connected layers and the softmax classifier~\cite{lecun1998}. 
For the fashion-MNIST, CIFAR10, and GSC classifications, we use the VGGNet, a CNN model using only $3\times 3$ convolutional kernels~\cite{simonyan2015}.
The parameter settings of the CNN architectures are shown in Table \ref{fl:tab002}.
As a loss function in the training process, we use the cross-entropy:
\begin{equation}
H(\mathbf{y},\widehat{\mathbf{y}}) = -\sum\limits_{i=1}^{10} (y_i\ln \widehat{y}_i + (1-y_i)\ln (1-\widehat{y}_i)),
\end{equation}
where $\widehat{\mathbf{y}} = \matc{\widehat{y}_1 & \cdots & \widehat{y}_{10}}^T$ is the predicted softmax output and $\mathbf{y} = \matc{y_1 & \cdots & y_{10}}^T$ is the one-hot vector of the true label.
For all experiments, we set the learning rate $\alpha$ to a fixed constant ($\alpha = 0.002$) and set the number of local epochs to $E = 3$. 
We initialize the local model of each device with a different random seed.

\begin{table*}[t!]
\centering
\caption{Total local updates uploaded to the server.}
{\normalsize
\begin{tabular}{c|c|c|c|c|c|c|c}
\hline
\multirow{2}{*}{Iter} & \multicolumn{2}{c|}{\shortstack{FedAvg}} & \multicolumn{2}{c}{\shortstack{SAFL}} & \multicolumn{2}{|c}{\shortstack{Extended SAFL}}  \TBstrut \\
\cline{2-7}
& \shortstack{Total updates} & \shortstack{Test Acc. } & \shortstack{Total updates}  & \shortstack{Test Acc. } & \shortstack{Total updates}  & \multicolumn{1}{|c}{\shortstack{Test Acc. }}  \TBstrut \\
\hline
60   &  3050   &  52.22\%  &   3050   &  93.72\%  &   2023  &   \multicolumn{1}{|c}{94.83\%}   \\
70   &  3550   &  78.91\%   &  3550   &  95.35\%  &   2448   &  \multicolumn{1}{|c}{95.87\%}   \\
80    & 4050    & 92.32\%   &  4050   &  96.22\%  &   2883   &  \multicolumn{1}{|c}{96.51\%}   \\
\hline
\end{tabular}
}
\label{fl:tab101}
\end{table*}

We first evaluate the test accuracy of SAFL for different network size ($n = 40, 80, \text{ and } 120$).  
In this experiment, we set the parameters $\epsilon = 0.3$, $L = 80$, $\overline{m} = 600$, and $\sigma^2 = 100$. 
In Fig. \ref{fl:fig003}, we plot the test accuracy of SAFL and the conventional FedAvg as a function of the communication round.
The baseline is the centralized machine learning using the whole dataset.
From the results, we observe that the accuracy of all the FL algorithms improves  after a sufficient communication rounds (e.g., 100 rounds) and the performance of all FL algorithms eventually converges to the accuracy of the centralized learning technique. 
In particular, the proposed SAFL outperforms the standard FL technique by a large margin. 
For example, for $n = 80$, SAFL achieves the test accuracy of 95\% at the 56-th communication round, resulting in an accuracy improvement of more than 50\% (see Fig. \ref{fl:fig003}a).
We also observe that the proposed SAFL converges faster than the standard FL technique.
For example, when $n = 120$, SAFL achieves the accuracy of 80\% in 35 communication rounds, while the standard FL technique requires more than 70 rounds to achieve the same level of accuracy (see Fig. \ref{fl:fig003}a). Similar results can be observed from the GSC dataset (see Fig. \ref{fl:fig003}b). 
In Table. \ref{fl:tab001}, we show the top-1 and top-5 accuracy of SAFL in the early stage of SA for $n = 80$. From these experiments, we observe that SAFL outperforms the conventional approaches, resulting in an 17\% improvement of the top-1 accuracy on the FMNIST dataset.



We next examine the impact of the hyperparameters $\epsilon$ and $L$ on the performance of SAFL.
In this MNIST experiment, we set $n = 50$ and run simulations for different values $\epsilon = 0.3, 0.5,\text{ and } 0.7$.
In Table. \ref{fl:tab005}, we show the test cost and the test accuracy evaluated at the $50$-th communication round.
The best performance of SAFL is highlighted with bold digits.
For example, when $\epsilon = 0.3$ and $L = 80$, SAFL achieves the smallest MSE (i.e., MSE = $0.186$) and the best accuracy (i.e., 95.3\%).

We test the performance of SAFL for different training batch sizes ($B = 50$, 100, 150, and 200). For all the MNIST experiments, we set the parameters $n = 100$, $\epsilon = 0.3$, $L = 80$, $\overline{m} = 600$, $\sigma^2 = 100$. The MNIST accuracy are tested after $T = 100$ communication rounds. 
From the results, we observe that the small and moderate batch size can be used to enhance the accuracy of the FL networks, especially when the data size is reduced by half. For example, the batch size $B = 50$ gives more than  97.99\% SAFL accuracy while the batch size $B = 200$ results in less than 97.08\% accuracy (see Table. \ref{fl:tab100}).

\begin{figure}[t!]
\centering
\begin{adjustbox}{max width=1\textwidth, max totalheight=\textheight,keepaspectratio}
\begin{tikzpicture}
\node (c0) {\includegraphics[scale=0.42]{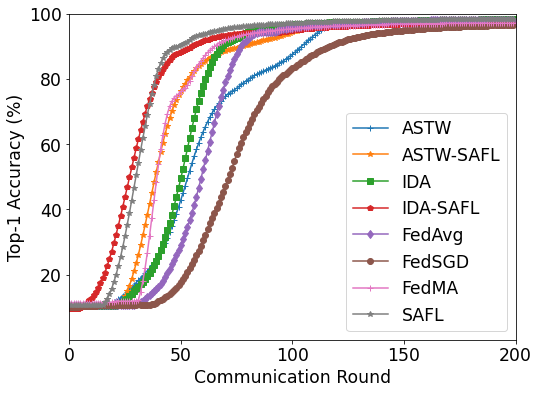}};


\end{tikzpicture}
\end{adjustbox}
\caption{The learning performance of the FL techniques.}
\label{fl:fig101}
\end{figure}

We also test the accuracy of SAFL for different fraction of selected devices ($R = 0.3$, 0.5, and 0.7). 
From the results, we observe that SAFL outperforms FedAvg, resulting in more than 50\% improvement of the test accuracy after 50 communication rounds when $R = 0.7$ (see Fig. \ref{fl:fig103}).

Next we evaluate the test accuracy of the extended SAFL as a function of the total local updates uploaded to the server. Here, we set $n = 100$ local devices and count the total local updates in different iterations ($T = 60$, 70, and 80). We run 100 trials and compute the mean values (see Table \ref{fl:tab101}). 
From the results, we observe that SAFL has the same communication cost as FedAvg.
While the accuracy of the extended SAFL is comparable to the SAFL accuracy, the
extended SAFL significantly reduces the number of the local updates uploaded by the devices, resulting in more than 30\% reduction of the local updates (see Table \ref{fl:tab101}).

Finally, we compare the performance of SAFL with the state-of-the-art FL techniques including the temporally weighted aggregation asynchronous (ASTW)~\cite{chensun2019}, IDA~\cite{yeganehfarshad2020}, FedAvg, FedSGD~\cite{chenmonga2016}, and FedMA~\cite{wangyurochkin2020}. 
We also test the combined algorithms: ASTW-SAFL and IDA-SAFL which are combined version of SAFL and ASTW/IDA fusion models~\cite{jisaravirta2021}.
From the results, we observe that SAFL outperforms FedSGD and FedAvg by a large margin, resulting in more than 50\% improvement of the test accuracy after 50 communication rounds. The performance of SAFL is comparable to that of FedMA.
We also observe that the combination of SAFL and state-of-the-art data fusion model can boost up the learning accuracy significantly. For example, IDA-SAFL can achieve more
than 80\% accuracy after 50 communication rounds, resulting in more than 30\% improvement of the test accuracy over the conventional IDA.

\section{Conclusion}
In this paper, we proposed a FL technique that greatly improves the accuracy and convergence speed of FL. Motivated by the observation that the average-based global model is not necessarily better than local models, the proposed SAFL technique allows each device to choose its own model instead of the global model in the early stage of FL. From the convergence analysis, we showed that SAFL sublinearly converges to the optimal solution under suitable conditions. Also, from the numerical experiments based on various benchmark datasets, we demonstrated that SAFL outperforms the conventional FL technique in terms of the convergence speed and the classification accuracy.   
In this work, we restricted our attention to the single-task learning scenario.
Our future work will be directed toward the extension to the multi-tasking scenario~\cite{smith2017}.

\begin{appendices}

\section{Proof of $\mathbf{w}^{(1)}$, $\mathbf{w}^{(2)}$, and $\mathbf{w}_\ast$}
\label{fl:apxF}
\begin{proof}
We first find the solution $\mathbf{w}^{(1)}$. 
Let $(\mathbf{x}_1, y_1) = (\matc{\frac{1}{4} & 0}^T, -1)$ and $\mathbf{w} = \matc{w_1 & w_2}^T$.
Then, we have
\begin{eqnarray}
\mathbf{w}^{(1)} & = & \text{arg}\min\limits_\mathbf{w} \:\: J(\mathbf{w},\mathcal{D}_1)\nonumber\\
& = & \text{arg}\min\limits_\mathbf{w} \:\: (y_1 - \mathbf{x}_1^T\mathbf{w})^2 + \|\mathbf{w}\|_1 \nonumber\\
 & = & \text{arg}\min\limits_{\mathbf{w}} \:\: (-1 - \frac{1}{4} w_1)^2 + |w_1| + |w_2|\nonumber\\
  & = & \matc{\text{arg}\min\limits_{w_1} \:\: (-1 - \frac{1}{4} w_1)^2 + |w_1| \\
  \text{arg}\min\limits_{w_2} \:\: |w_2|}\nonumber\\
  & \overset{(a)}{=} & \matc{0 \\ 0}\nonumber ,
\label{fl:eq102}
\end{eqnarray}
where (a) is because $(-1 - \frac{1}{4} w_1)^2 + |w_1| = 1 + \frac{1}{16}w_1^2 + \frac{1}{2}w_1 + |w_1|\geq 1 + \frac{1}{16}w_1^2 \geq 1$ and the equality holds if and only if $w_1=0$.  
Similarly, we can find out the solutions $\mathbf{w}^{(2)}=\matc{0 & \frac{4}{9}}^T$ and $\mathbf{w}_\ast=\matc{0 & \frac{4}{9}}^T$, which is the desired results. 

%
%

\end{proof}

\section{Proof of Lemma \ref{fl:lm002}}
\label{fl:apxA}
\begin{proof}
In this proof, we first show a recursive inequality of the MSE and then build the upper bound of the MSE.

Let $\mathbff{e}{t}{k} = \mathbff{w}{t}{k} - \mathbf{w}_\ast$ and $\Delta \mathbff{z}{t}{k} = \mathbff{z}{t}{k} - \overline{\mathbf{z}}_{t}$, then from \eqref{eq:local update formula}, and \eqref{eq:local training formula}, we have
\begin{align}
\mathbff{e}{t}{k} & =   \mathbff{w}{t}{k} - \mathbf{w}_\ast \nonumber\\
& =   \delta_t\mathbff{u}{t}{k}\odot\overline{\mathbf{z}}_{t}  + (\mathbf{1}-\delta_t\mathbff{u}{t}{k})\odot\mathbff{z}{t}{k} - \mathbf{w}_\ast  \nonumber\\
& =   \mathbff{z}{t}{k} - \mathbf{w}_\ast - \delta_t\mathbff{u}{t}{k}\odot\Delta \mathbff{z}{t}{k}    \nonumber\\
& =   (\mathbff{w}{t-1}{k} - \mathbf{w}_\ast) - \delta_t\mathbff{u}{t}{k}\odot\Delta \mathbff{z}{t}{k}   - \alpha  \nabla F_k(\mathbff{w}{t-1}{k};A_{t}^{(i_k)})\nonumber\\
 & =   \mathbff{e}{t-1}{k} - \delta_t\mathbff{u}{t}{k}\odot\Delta \mathbff{z}{t}{k} - \alpha(\nabla F_k(\mathbff{w}{t-1}{k}) - \nabla F_k(\mathbf{w}_\ast))  
 + \alpha(\nabla F_k(\mathbff{w}{t-1}{k}) - \nabla F_k(\mathbff{w}{t-1}{k};A_{t}^{(i_k)})) \nonumber.
\end{align}
Applying Taylor's expansion yields
\begin{align}
\mathbff{e}{t}{k} & =   \mathbff{e}{t-1}{k} - \delta_t\mathbff{u}{t}{k}\odot\Delta \mathbff{z}{t}{k} - \alpha \nabla^2 F(\mathbff{\boldsymbol\xi}{t-1}{k})\mathbff{e}{t-1}{k}    
 + \alpha(\nabla F_k(\mathbff{w}{t-1}{k}) - \nabla F_k(\mathbff{w}{t-1}{k};A_{t}^{(i_k)})) \nonumber\\
& =   (\mathbf{I} - \alpha \nabla^2 F_k(\mathbff{\boldsymbol\xi}{t-1}{k}))\mathbff{e}{t-1}{k} - \delta_t\text{Diag}(\mathbff{u}{t}{k})\Delta \mathbff{z}{t}{k}   
 + \alpha(\nabla F_k(\mathbff{w}{t-1}{k}) - \nabla F_k(\mathbff{w}{t-1}{k};A_{t}^{(i_k)})) \nonumber , 
\end{align}
where $\mathbff{\boldsymbol\xi}{t-1}{k}$ is a point in the line segment of two endpoints $\mathbff{w}{t-1}{k}$ and $\mathbf{w}_\ast$.

Taking the conditional variance of $\mathbff{e}{t}{k}$, we have 
\begin{align}
 E[\|\mathbff{e}{t}{k}\|_2^2|\mathbff{w}{t-1}{k},\Delta \mathbff{z}{t}{k},\mathbff{u}{t}{k}]
 & =  \| E[\mathbff{e}{t}{k}|\mathbff{w}{t-1}{k},\Delta \mathbff{z}{t}{k},\mathbff{u}{t}{k}]\|_2^2 
 + tr(Var(\mathbff{e}{t}{k}|\mathbff{w}{t-1}{k},\Delta \mathbff{z}{t}{k},\mathbff{u}{t}{k})) \nonumber\\
& =  \| \mathbf{A}\mathbff{e}{t-1}{k} - \delta_t\text{Diag}(\mathbff{u}{t}{k})\Delta \mathbff{z}{t}{k} \|_2^2 
  + \alpha^2tr(Var(\nabla F_k(\mathbff{w}{t-1}{k}) \nonumber\\
& \quad - \nabla F_k(\mathbff{w}{t-1}{k};A_{t}^{(i_k)})|\mathbff{w}{t-1}{k},\Delta \mathbff{z}{t}{k},\mathbff{u}{t}{k})) \nonumber,
\end{align}
where $\mathbf{A} = \mathbf{I} - \alpha  \nabla^2 F_k(\mathbff{\boldsymbol\xi}{t-1}{k})$.

By Assumption \textbf{A3}, the stochastic gradient has a bounded variance:
\begin{align}
tr(Var(\nabla F_k(\mathbff{w}{t-1}{k}) - \nabla F_k(\mathbff{w}{t-1}{k};A_{t}^{(i_k)})|\mathbff{w}{t-1}{k})) \leq \sigma_k^2.
\end{align} 
Thus, we have
\begin{align}
 E[\|\mathbff{e}{t}{k}\|_2^2|\mathbff{w}{t-1}{k},\Delta \mathbff{z}{t}{k},\mathbff{u}{t}{k}]
& \leq  \| \mathbf{A}\mathbff{e}{t-1}{k} - \delta_t\text{Diag}(\mathbff{u}{t}{k})\Delta \mathbff{z}{t}{k} \|_2^2   + \alpha^2\sigma_k^2 \nonumber\\
& =  \| \mathbf{A}\mathbff{e}{t-1}{k}\|_2^2 + \delta_t\|\text{Diag}(\mathbff{u}{t}{k})\Delta \mathbff{z}{t}{k} \|_2^2   + \alpha^2\sigma_k^2 \nonumber\\
&\quad  - 2\delta_t(\Delta \mathbff{z}{t}{k})^T\text{Diag}(\mathbff{u}{t}{k})\mathbf{A} \mathbff{e}{t-1}{k}\nonumber .
\end{align} 

Taking expectations of the last inequality, and noting the law of total expectation ($E[X]=E[E[X|Y]]$), we have
\begin{align}
 E[\|\mathbff{e}{t}{k}\|_2^2 ] 
& =   E[E[\|\mathbff{e}{t}{k}\|_2^2|\mathbff{w}{t-1}{k},\Delta \mathbff{z}{t}{k},\mathbff{u}{t}{k}]]\notag\\
& \leq   E[\| \mathbf{A}\mathbff{e}{t-1}{k}\|_2^2] + \delta_tE[\|\text{Diag}(\mathbff{u}{t}{k})\Delta \mathbff{z}{t}{k} \|_2^2]  + \alpha^2\sigma_k^2 \notag\\
&\quad - 2\delta_tE[(\Delta \mathbff{z}{t}{k})^T\text{Diag}(\mathbff{u}{t}{k})\mathbf{A}\mathbff{e}{t-1}{k}]  \notag.
\end{align}
By Assumption \textbf{A2}, the positive definite matrix $\mathbf{A}$ satisfies $\| \mathbf{A}\mathbff{e}{t-1}{k}\|_2^2\leq (1 - \alpha\mu)^2 \|\mathbff{e}{t-1}{k}\|^2_2$. It follows that
\begin{align}
 E[\|\mathbff{e}{t}{k}\|_2^2 ] 
& \leq   (1 - \alpha\mu)^2 E[\|\mathbff{e}{t-1}{k}\|^2_2] + \alpha^2\sigma_k^2 \notag\\
&\quad + \delta_tE[\|\text{Diag}(\mathbff{u}{t}{k})\Delta \mathbff{z}{t}{k} \|_2^2] 
 - 2\delta_tE[(\Delta \mathbff{z}{t}{k})^T\text{Diag}(\mathbff{u}{t}{k})\mathbf{A}\mathbff{e}{t-1}{k}]  .
\label{fl:eq400}
\end{align}
Using the law of total expectation, one can easily check that
\begin{align}
 E[\|\text{Diag}(\mathbff{u}{t}{k})\Delta \mathbff{z}{t}{k} \|_2^2]
& = E[E[\|\text{Diag}(\mathbff{u}{t}{k})\Delta \mathbff{z}{t}{k} \|_2^2 |\mathbff{w}{t-1}{k},\Delta \mathbff{z}{t}{k}]]\notag\\
\label{fl:eq401}
& = (1 - p(1 - \epsilon^2)) E[\| \Delta \mathbff{z}{t}{k}\|_2^2 ]\\
 E[(\Delta \mathbff{z}{t}{k})^T\text{Diag}(\mathbff{u}{t}{k})\mathbf{A}\mathbff{e}{t-1}{k}] 
& = E[E[(\Delta \mathbff{z}{t}{k})^T\text{Diag}(\mathbff{u}{t}{k})\mathbf{A}\mathbff{e}{t-1}{k} | |\mathbff{w}{t-1}{k},\Delta \mathbff{z}{t}{k}]]\notag\\
\label{fl:eq402}
& = (1 - p(1 - \epsilon)) E[(\Delta \mathbff{z}{t}{k})^T\mathbf{A}\mathbff{e}{t-1}{k} ]
\end{align} 
From \eqref{fl:eq400}, \eqref{fl:eq401}, and \eqref{fl:eq402}, we have
\begin{align}
 E[\|\mathbff{e}{t}{k}\|_2^2 ] 
& \leq   (1 - \alpha\mu)^2 E[\|\mathbff{e}{t-1}{k}\|^2_2] + \delta_t (1 - p(1 - \epsilon^2)) E[\| \Delta \mathbff{z}{t}{k}\|_2^2 ]  \notag\\
& \quad  - 2\delta_t (1 - p(1 - \epsilon)) E[(\Delta \mathbff{z}{t}{k})^T\mathbf{A}\mathbff{e}{t-1}{k} ] + \alpha^2\sigma_k^2 \notag \\
& =   (1 - \alpha\mu)^2 E[\|\mathbff{e}{t-1}{k}\|^2_2]  -\delta_t E[\Omega] + \alpha^2\sigma_k^2\notag\\
& \overset{(a)}{\leq}  (1 - \alpha\mu)^2 E[\|\mathbff{e}{t-1}{k}\|^2_2] + \alpha^2\sigma_k^2  \notag\\
&\quad -\delta_t (1 - p(1 - \epsilon^2))(1   - \alpha  (2\lambda - \mu))^2E[\|\mathbff{e}{t-1}{k}\|^2_2]\notag\\
& \overset{(b)}{\leq}   \tau_t E[\|\mathbff{e}{t-1}{k}\|^2_2]  + \alpha^2\sigma_k^2, \label{fl:eq024}
\end{align}
where $\Omega = 2 (1 - p(1 - \epsilon)) (\Delta \mathbff{z}{t}{k})^T\mathbf{A}\mathbff{e}{t-1}{k}  - (1 - p(1 - \epsilon^2)) \| \Delta \mathbff{z}{t}{k}\|_2^2 $, $\tau_t = (1-\alpha\mu)^2 - \delta_t(1 - p(1 - \epsilon^2))(1   - \alpha  (2\lambda - \mu))^2$, and (a) is because $E[\Omega]\geq (1 - p(1 - \epsilon^2))(1   - \alpha  (2\lambda - \mu))^2E[\|\mathbff{e}{t-1}{k}\|^2_2]$ (see Appendix \ref{fl:apxB}).

Using the recursion relationship (between $E[\|\mathbff{e}{t}{k}\|^2_2]$ and $E[\|\mathbff{e}{t-1}{k}\|^2_2]$) in \eqref{fl:eq024}, and noting that $\tau_t > 0$ as long as $\alpha\leq \frac{1}{2\lambda-\mu}$, we have 
\begin{align}
 E[\|\mathbff{e}{t}{k}\|_2^2 ] 
& \leq   \prod\limits_{j=1}^t \tau_jE[\|\mathbff{e}{0}{k}\|^2_2] + \alpha^2\sigma_k^2 (1 + \sum\limits_{i=0}^{t-2}\prod\limits_{j=0}^{i}\tau_{t-j})\notag \\
& =   (1-\alpha\mu)^{2t - 2\lfloor\frac{t}{q} \rfloor}\tau^{\lfloor\frac{t}{q} \rfloor}  E[\|\mathbff{e}{0}{k}\|^2_2] \notag\\
& \quad + \alpha^2\sigma_k^2 \sum\limits_{i=0}^{q-1}(1-\alpha\mu)^{2i}\sum\limits_{j=0}^{\lceil\frac{t+1}{q} \rceil - 1}\tau^j(1-\alpha\mu)^{2j(q-1)}\notag \\
& =    (1-\alpha\mu)^{2t - 2\lfloor\frac{t}{q} \rfloor}\tau^{\lfloor\frac{t}{q} \rfloor}  E[\|\mathbff{e}{0}{k}\|^2_2] \notag\\
& \quad + \alpha^2\sigma_k^2 \left(\frac{1 - (1-\alpha\mu)^{2q}}{1 - (1-\alpha\mu)^{2}}\right) \frac{1 - (\tau(1-\alpha\mu)^{2q-2})^{\lceil\frac{t+1}{q} \rceil }}{1 -  \tau(1-\alpha\mu)^{2q-2}}\notag \\
& \overset{(a)}{=}   (1-\alpha\mu)^{2t - 2\lfloor\frac{t}{q} \rfloor}\tau^{\lfloor\frac{t}{q} \rfloor}  E[\|\mathbff{e}{0}{k}\|^2_2]  \notag\\
&\quad +  \frac{\alpha^2\sigma_k^2}{1 - (1-\alpha\mu)^{2}} \frac{1 - (1-\alpha\mu)^{2q}}{1 -  \tau(1-\alpha\mu)^{2q-2}}\notag \\
& \overset{(b)}{\leq}   (1-\alpha\mu)^{2t}e^{-c\lfloor\frac{t}{q} \rfloor}  E[\|\mathbff{e}{0}{k}\|^2_2]  \notag\\
&\quad +  \frac{\alpha^2\sigma_k^2}{1 - (1-\alpha\mu)^{2}} \frac{1 - (1-\alpha\mu)^{2q}}{1 -  e^{-c}(1-\alpha\mu)^{2q}}\notag ,
\end{align}
where $q$ is the number of local iterations, $\tau = (1-\alpha\mu)^2 -  (1 - p(1 - \epsilon^2))(1   - \alpha  (2\lambda - \mu))^2$, $c = (1 - p(1 - \epsilon^2))(\frac{1   - \alpha  (2\lambda - \mu)}{1   - \alpha  \mu})^2$, (a) is because $1 - (\tau(1-\alpha\mu)^{2q-2})^{\lceil\frac{t+1}{q} \rceil }\leq 1$, and (b) is because $\tau \leq (1-\alpha\mu)^2e^{-c}$, which is the desired result.

\end{proof}

\section{Proof of Lemma \ref{fl:lm003}}
\label{fl:apxD}
\begin{proof}
We first show a recursive inequality of $\|E[\mathbff{w}{t}{k} - \mathbf{w}_\ast]\|_2$ and then build an upper bound of this term.

Let $\mathbff{e}{t}{k} = \mathbff{w}{t}{k} - \mathbf{w}_\ast$ and $\Delta \mathbff{z}{t}{k} = \mathbff{z}{t}{k} - \overline{\mathbf{z}}_{t}$, then from \eqref{eq:local update formula}, and \eqref{eq:local training formula}, we have
\begin{align}
 \|E[\mathbf{A}\mathbff{e}{t}{k}]\|_2 
& =   \|E[\mathbf{A}(\mathbff{w}{t}{k} - \mathbf{w}_\ast)]\|_2 \notag\\
& =   \|E[\mathbf{A}(\delta_t\mathbff{u}{t}{k}\odot\overline{\mathbf{z}}_{t}  + (\mathbf{1}-\delta_t\mathbff{u}{t}{k})\odot\mathbff{z}{t}{k} - \mathbf{w}_\ast)] \|_2 \notag\\
& \overset{(a)}{=}   \|\delta_t(\epsilon p + 1 - p)E[\mathbf{A}(\overline{\mathbf{z}}_{t}-\mathbf{w}_\ast)] 
 + (1-\delta_t(\epsilon p + 1 - p))E[\mathbf{A}(\mathbff{z}{t}{k} - \mathbf{w}_\ast)]\|_2  \nonumber\\
& =   \|\delta_t(\epsilon p + 1 - p)E[\mathbf{A}(\sum\limits_i\eta_i \mathbff{z}{t}{i} - \mathbf{w}_\ast)] 
 + (1-\delta_t(\epsilon p + 1 - p))E[\mathbf{A}(\mathbff{z}{t}{k} - \mathbf{w}_\ast)]\|_2  ,
\end{align}
where $\mathbf{A}$ is a matrix independent of $\mathbf{u}_t^{(k)}$ and (a) is because $E[u_j] = \epsilon p + 1 - p$ for all element $u_j$ of $\mathbf{u}_t^{(k)}$.

Applying Jensen's inequality yields
\begin{align}
\|E[\mathbf{A}\mathbff{e}{t}{k}]\|_2 
& \overset{(b)}{\leq}   \delta_t(\epsilon p + 1 - p)\sum\limits_i\eta_i \|E[ \mathbf{A}(\mathbff{z}{t}{i} - \mathbf{w}_\ast)]\|_2 \notag\\
& \quad  + (1-\delta_t(\epsilon p + 1 - p))\|E[\mathbf{A}(\mathbff{z}{t}{k} - \mathbf{w}_\ast)] \|_2 \notag\\
& \overset{(c)}{\leq}   \max\limits_i \|E[\mathbf{A}(\mathbff{z}{t}{i} - \mathbf{w}_\ast)] \|_2 ,
\label{fl:eq201}
\end{align}
where the last inequality is because $\sum_i\eta_i = 1$. 

Also, from the update expression \eqref{eq:local training formula}, we have 
\begin{align}
 E[\mathbf{A}(\mathbff{z}{t}{k} - \mathbf{w}_\ast)] 
& =   E[\mathbf{A}(\mathbff{w}{t-1}{k} - \alpha  \nabla F_k(\mathbff{w}{t-1}{k};A_{t}^{(j_k)}) - \mathbf{w}_\ast)] \notag\\
 & =   E[\mathbf{A}(\mathbff{e}{t-1}{k} -  \alpha(\nabla F_k(\mathbff{w}{t-1}{k}) - \nabla F_k(\mathbf{w}_\ast)))]  \notag\\
 & \quad + \alpha E[\mathbf{A}(\nabla F_k(\mathbff{w}{t-1}{k}) - \nabla F_k(\mathbff{w}{t-1}{k};A_{t}^{(j_k)}))] .
\end{align}
Using Taylor's expansion, we have
\begin{align}
 E[\mathbf{A}(\mathbff{z}{t}{k} - \mathbf{w}_\ast)] 
 & =   E[\mathbf{A}(\mathbff{e}{t-1}{k} -  \alpha \nabla^2 F(\mathbff{\boldsymbol\xi}{t-1}{k})\mathbff{e}{t-1}{k} )]  \nonumber\\
 & \quad + \alpha E[\mathbf{A}(\nabla F_k(\mathbff{w}{t-1}{k}) - \nabla F_k(\mathbff{w}{t-1}{k};A_{t}^{(j_k)}))] \notag\\
& =   E[\mathbf{A}\mathbf{G}^{(k)}_{t-1}\mathbff{e}{t-1}{k}] \notag\\
 & \quad + \alpha E[\mathbf{A}(\nabla F_k(\mathbff{w}{t-1}{k}) - \nabla F_k(\mathbff{w}{t-1}{k};A_{t}^{(j_k)}))] \notag,
\end{align}
where $\mathbf{G}^{(k)}_{t-1} = \mathbf{I} - \alpha \nabla^2 F_k(\mathbff{\boldsymbol\xi}{t-1}{k})$ and $\mathbff{\boldsymbol\xi}{t-1}{k}$ is a point in the line segment of two endpoints $\mathbff{w}{t-1}{k}$ and $\mathbf{w}_\ast$.

Using the law of total expectation yields
\begin{align}
E[\mathbf{A}(\mathbff{z}{t}{k} - \mathbf{w}_\ast)] 
& =   E[\mathbf{A}\mathbf{G}^{(k)}_{t-1}\mathbff{e}{t-1}{k}] 
 + \alpha E[\mathbf{A}(\nabla F_k(\mathbff{w}{t-1}{k}) - \nabla F_k(\mathbff{w}{t-1}{k};A_{t}^{(j_k)}))] \notag\\
& = E[\mathbf{A}\mathbf{G}^{(k)}_{t-1}\mathbff{e}{t-1}{k}] 
  + \alpha E[\mathbf{A}E[\nabla F_k(\mathbff{w}{t-1}{k}) - \nabla F_k(\mathbff{w}{t-1}{k};A_{t}^{(j_k)})|\mathbf{A},\mathbff{w}{t-1}{k}]] \notag\\
& = E[\mathbf{A}\mathbf{G}^{(k)}_{t-1}\mathbff{e}{t-1}{k}] ,
\label{fl:eq202}
\end{align}
where the last equality is because the input data $A_{t}^{(i_k)}$ is sampled identically and independently in each iteration and $E[\nabla F_k(\mathbff{w}{t-1}{k}) - \nabla F_k(\mathbff{w}{t-1}{k};A_{t}^{(j_k)})|\mathbf{A},\mathbff{w}{t-1}{k}] = 0.$


From \eqref{fl:eq201} and \eqref{fl:eq202}, we have 
\begin{eqnarray}
\|E[\mathbf{A}\mathbff{e}{t}{k}]\|_2 & \leq & \max\limits_i \|E[\mathbf{A}(\mathbff{z}{t}{i} - \mathbf{w}_\ast)] \|_2 \nonumber\\
& = & \max\limits_i E[\mathbf{A}\mathbf{G}^{(i)}_{t-1}\mathbff{e}{t-1}{i}] \nonumber.
\end{eqnarray}
Letting $j = \text{arg}\max\limits_i E[\mathbf{A}\mathbf{G}^{(i)}_{t-1}\mathbff{e}{t-1}{i}]$, we have $$\|E[\mathbf{A}\mathbff{e}{t}{k}]\|_2 \leq E[\mathbf{A}\mathbf{G}^{(j)}_{t-1}\mathbff{e}{t-1}{j}].$$ 
Applying this inequality yields
\begin{eqnarray}
\|E[\mathbff{e}{t}{k}]\|_2 & \leq &  \|E[\mathbf{G}^{(i_{t-1})}_{t-1}\mathbff{e}{t-1}{i_{t-1}}]\|_2 \nonumber\\
& \leq &   \|E[\mathbf{G}^{(i_{t-1})}_{t-1}\mathbf{G}^{(i_{t-2})}_{t-2}\mathbff{e}{t-2}{i_{t-2}}] \|_2\nonumber\\
& \leq & \|E[\mathbf{G}^{(i_{t-1})}_{t-1}\mathbf{G}^{(i_{t-2})}_{t-2}\mathbf{G}^{(i_{t-3})}_{t-3}\cdots\mathbf{G}^{(i_0)}_{0}\mathbff{e}{0}{i_0}]\|_2\nonumber\\
& \overset{(a)}{\leq} & E[\|\mathbf{G}^{(i_{t-1})}_{t-1}\cdots\mathbf{G}^{(i_0)}_{0}\mathbff{e}{0}{i_0}\|_2] \nonumber\\
& \overset{(b)}{\leq} & (1-\alpha\mu)^{t} E[\|\mathbff{e}{0}{i_0}\|_2] \nonumber\\
& \overset{(c)}{\leq} & (1-\alpha\mu)^{t} \sqrt{\zeta} ,
\label{fl:eq203}
\end{eqnarray}
where $i_{t-1} = \text{arg}\max\limits_i \|E[\mathbf{G}^{(i)}_{t-1}\mathbff{e}{t-1}{k}]\|_2$ and $i_{t-j} =   \text{arg}\max\limits_i \|E[\mathbf{G}^{(i_{t-1})}_{t-1}\cdots\mathbf{G}^{(i)}_{t-j}\mathbff{e}{t-j}{i}] \|_2$ ($j=2,3,\cdots,t$),  $\zeta = \max_i E[\|\mathbff{e}{0}{i}]\|_2^2]$, (a) is because of Jensen's inequality ($\|\gamma\mathbf{x}+(1-\gamma)\mathbf{y}\|_2\leq \gamma\|\mathbf{x}\|_2+(1-\gamma)\|\mathbf{y}\|_2$), (b) is because $\|\mathbf{G}^{(i)}_{t-1}\| \leq  1-\alpha\mu $, and (c) is because $\zeta \geq E[\|\mathbff{e}{0}{i_0}\|_2^2]\geq (E[\|\mathbff{e}{0}{i_0}\|_2])^2$, which is the desired result.


\end{proof}

\section{Proof of $\Omega\geq (1 - p(1 - \epsilon^2))(1   - \alpha  (2\lambda - \mu))^2\|\mathbff{e}{t-1}{k}\|^2_2$}
\label{fl:apxB}
\begin{proof}
In this proof, we show that $\Omega\geq (1 - p(1 - \epsilon^2))\|\mathbff{e}{t-1}{k}\|^2_2$ for some values of $\epsilon$ and $p$ as long as $\alpha < \frac{1}{2\lambda - \mu}$. In fact, we have
\begin{align}
\Omega   & =   2 (1 - p(1 - \epsilon)) (\Delta \mathbff{z}{t}{k})^T\mathbf{A}\mathbff{e}{t-1}{k} \notag\\
&\quad - (1 - p(1 - \epsilon^2)) \| \Delta \mathbff{z}{t}{k}\|_2^2  \notag\\
& =   \|\mathbff{e}{t-1}{k}\|_2^2  \left(2 (1 - p(1 - \epsilon)) \rho s \right. \notag\\
&\quad \left. - (1 - p(1 - \epsilon^2))s^2\right),\label{fl:eq026}
\end{align}
where $\rho = \frac{(\Delta \mathbff{z}{t}{k})^T\mathbf{A}\mathbff{e}{t-1}{k}}{\|\Delta \mathbff{z}{t}{k}\|_2\|\mathbff{e}{t-1}{k}\|_2}$ and $s = \frac{\| \Delta \mathbff{z}{t}{k}\|_2}{\|\mathbff{e}{t-1}{k}\|_2}$. Note that when $\|\mathbff{e}{t-1}{k}\|_2 = 0$ (i.e., $s\rightarrow\infty$), the algorithm already converges to the optimum $\mathbf{w}_\ast$. When $\|\Delta \mathbff{z}{t}{k}\|_2 = 0$ (i.e., $s = 0$), it is clear that $\Omega = 0$. So, we only need to consider the case of $0 < s < \infty$.    

First, we recall that for $\mathbf{x}$ and $\mathbf{y}$ satisfying $\|\mathbf{x}\|_2 = \|\mathbf{y}\|_2 = 1$, it follows $2\mathbf{x}^T\mathbf{A}\mathbf{y} = \mathbf{x}^T\mathbf{A}\mathbf{x} + \mathbf{y}^T\mathbf{A}\mathbf{y} - (\mathbf{x} - \mathbf{y})^T\mathbf{A}(\mathbf{x} - \mathbf{y}) \geq 2\lambda_{\min}  - \lambda_{\max} $ where $\lambda_{\max}$ and $\lambda_{\min} $ are the largest and the smallest eigenvalues of $\mathbf{A}$. Since $\mathbf{A} =  \mathbf{I} - \alpha  \nabla^2 F_k(\mathbff{\boldsymbol\xi}{t-1}{k})$, we have $\lambda_{\max} \leq 1- \alpha\mu$ and $\lambda_{\min} \geq 1- \alpha\lambda$. Therefore, we have 
\begin{equation}
\rho \geq \frac{1}{2}(2\lambda_{\min}(\mathbf{A}) - \lambda_{\max}(\mathbf{A}) \geq \frac{1}{2}(1   - \alpha  (2\lambda - \mu)).
\label{fl:eq025}
\end{equation}
Form \eqref{fl:eq026} and \eqref{fl:eq025}, we have
\begin{align}
\Omega & \geq   \|\mathbff{e}{t-1}{k}\|_2^2  \left((1 - p(1 - \epsilon)) (1   - \alpha  (2\lambda - \mu))s \right. \notag\\
&\quad \left. - (1 - p(1 - \epsilon^2))s^2\right)\nonumber .
\end{align}

Next, we have
\begin{align}
& \quad \frac{\Omega - \|\mathbff{e}{t-1}{k}\|_2^2 (1 - p(1 - \epsilon^2))(1   - \alpha  (2\lambda - \mu))^2}{\|\mathbff{e}{t-1}{k}\|_2^2 (s^2 + (1   - \alpha  (2\lambda - \mu))^2)} \notag\\
& =    (1 - p(1 - \epsilon))  \frac{(1   - \alpha  (2\lambda - \mu))s}{s^2 + (1   - \alpha  (2\lambda - \mu))^2}  - (1 - p(1 - \epsilon^2)) \notag \\
& \geq    (1 - p(1 - \epsilon))  c_1  - (1 - p(1 - \epsilon^2)) \notag \\
& \geq   g(\epsilon,p) \notag ,
\end{align}
where $c_1 = \min\limits_s\frac{(1   - \alpha  (2\lambda - \mu))s}{s^2 + (1   - \alpha  (2\lambda - \mu))^2}(\leq \frac{1}{2})$ and $g(\epsilon,p) = -p\epsilon^2 + pc_1\epsilon - (1-c_1)(1-p)$. Noting that $\frac{1}{2}\lambda\|\mathbff{e}{t-1}{k}\|^2_2 \geq f(\mathbff{w}{t-1}{k}) - f(\mathbf{w}_\ast) \geq \frac{1}{2}\mu\|\mathbff{e}{t-1}{k}\|^2_2$ and $f(\mathbf{w}_\ast) = 0$, we have 
\begin{align}
c_1  & =   \frac{(1   - \alpha  (2\lambda - \mu))s}{s^2 + (1   - \alpha  (2\lambda - \mu))^2} \notag\\
& \geq   \min\left(\frac{c_2\sqrt{\mu }(1   - \alpha  (2\lambda - \mu))}{c_2^2(1   - \alpha  (2\lambda - \mu))^2 + \mu } ,\right.\notag\\
&\qquad\qquad \left. \frac{c_2\sqrt{\lambda }(1   - \alpha  (2\lambda - \mu))}{c_2^2(1   - \alpha  (2\lambda - \mu))^2 + \lambda }\right), 
\end{align}
where $c_2 = \frac{1}{\|\Delta \mathbff{z}{t}{k}\|_2}\sqrt{2f(\mathbff{w}{t-1}{k})}$.

Now, what remains is to show $g(\epsilon,p)\geq 0$. In fact, we have
\begin{eqnarray}
g(\epsilon, p) & = & \frac{c_1^2}{4} - \frac{(1-c_1)(1-p)}{p} - (\epsilon - \frac{c_1}{2})^2.
\end{eqnarray}
It is not difficult to check that $g(\epsilon,p) \geq 0$ if 
\begin{eqnarray}
\label{fl:eq029}
\epsilon & = &  \frac{2(1-c_1)(1-p)}{c_1p}, \\
\label{fl:eq030}
p & \geq & \frac{4 (1 - c_1)}{(2 - c_1)^2} .
\end{eqnarray} 
Since $g(\epsilon,p) \geq 0$, we have $\Omega - \|\mathbff{e}{t-1}{k}\|_2^2 (1 - p(1 - \epsilon^2))(1   - \alpha  (2\lambda - \mu))^2 \geq 0$ which is the desired result.

\end{proof}

\section{Proof of Corollary \ref{fl:col001}}
\label{fl:apxAA}
\begin{proof}
Using Jensen's inequality, we have
\begin{eqnarray}
E[\|\widehat{\mathbf{w}}_t - \mathbf{w}_\ast\|_2^2] & = & E[\|\sum\limits_k \eta_k \mathbf{w}_t^{(k)} - \mathbf{w}_\ast\|_2^2] \nonumber\\
& \leq & \sum\limits_k\eta_k E[\|\mathbf{w}_t^{(k)} - \mathbf{w}_\ast\|_2^2]\nonumber\\
& \overset{(a)}{\leq} & \max\limits_k  E[\|\mathbf{w}_t^{(k)} - \mathbf{w}_\ast\|_2^2] \nonumber ,
\end{eqnarray}
where (a) is because $\sum_k\eta_k = 1$.
What remains is to show that if $\alpha_t = \frac{\alpha_0}{t+1}$, then
\begin{eqnarray}
E[\|\mathbf{w}_t^{(k)} - \mathbf{w}_\ast\|_2^2] \leq \frac{c}{t+1} .
\label{fl:eq207}
\end{eqnarray}
We will prove \eqref{fl:eq207} using the mathematical induction on $t$. First, since $c \geq \max_k E[\|\mathbf{w}_0^{(k)} - \mathbf{w}_\ast\|_2^2]$, it is clear that \eqref{fl:eq207} holds true for $t = 0$. 

Now we assume the induction hypothesis that \eqref{fl:eq207} holds true for $t-1$ and check if it also holds true for the case of $t$. Letting $\mathbff{e}{t}{k} = \mathbf{w}_t^{(k)} - \mathbf{w}_\ast$, and substituting $\alpha_t$ instead of $\alpha$ in \eqref{fl:eq024}, we have 
\begin{align}
E[\|\mathbff{e}{t}{k}\|_2^2 ] & \leq   (1 - \alpha_t\mu )^2 E[\|\mathbff{e}{t-1}{k}\|^2_2] + \alpha_t^2\sigma_k^2\notag \\
& \leq   (1 - \frac{\alpha_0\mu}{t+1}  )^2 \frac{c}{t} + \frac{\alpha_0^2\sigma_k^2}{(t+1)^2}\notag \\
& =   \frac{c}{t+1} \left( (1 - \frac{\alpha_0\mu}{t+1}  )^2 \frac{t+1}{t} + \frac{\alpha_0^2\sigma_k^2}{c(t+1)}\right) \notag \\
& \overset{(a)}{\leq}   \frac{c}{t+1} \left( (1 - \frac{\alpha_0\mu}{t+1}  )^2 \frac{t+1}{t} + \frac{2-(2-\alpha_0\mu)^2}{2(t+1)}\right) \notag \\
& =   \frac{c}{t+1} \frac{2t^2+t(2-\alpha_0^2\mu^2)+  2(1-\alpha_0\mu)^2}{2t(t+1)} \notag \\
& \overset{(b)}{<}   \frac{c}{t+1} \frac{2t^2+t(2-\alpha_0^2\mu^2)+  \alpha_0^2\mu^2}{2t(t+1)} \notag \\
& \overset{(c)}{\leq}   \frac{c}{t+1}  \notag ,
\end{align}
where (a) is because $c\geq \frac{2\alpha_0^2\sigma_k^2}{2-(2-\alpha_0\mu)^2}$, (b) is because $\frac{2-\sqrt{2}}{\mu} < \alpha_0 < \frac{2+\sqrt{2}}{\mu}$, and (c) is because $t\geq 1$, which is the desired result.

\end{proof}

\section{Proof of Theorem \ref{fl:thm003}}
\label{fl:apxAB}
\begin{proof}
We first find a recursive expression of $E[\sum\limits_k \eta_k \|\mathbf{w}_t^{(k)}-\mathbf{w}_\ast\|_2^2]$ and then prove by induction that
\begin{equation}
E[\sum\limits_k \eta_k \|\mathbf{w}_t^{(k)}-\mathbf{w}_\ast\|_2^2] \leq \frac{c}{t+1}.
\label{fl:eq215}
\end{equation}
From \eqref{eq:local update formula} and \eqref{eq:local training formula}, we have
\begin{align}
 E_\mathcal{U}[\sum\limits_k \eta_k \|\mathbf{w}_t^{(k)}-\mathbf{w}_\ast\|_2^2]
 & =   E_\mathcal{U}[\sum\limits_k \eta_k \|\delta_t \mathbff{u}{t}{k}\odot\overline{\mathbf{z}}_{t} + (\mathbf{1}-\delta_t \mathbff{u}{t}{k})\odot\mathbff{z}{t}{k}-\mathbf{w}_\ast\|_2^2]\notag\\
& =   E_\mathcal{U}[\sum\limits_k \eta_k \|\mathbff{z}{t}{k}-\mathbf{w}_\ast - \delta_t \mathbff{u}{t}{k}\odot(\mathbff{z}{t}{k} -\overline{\mathbf{z}}_{t} )\|_2^2]\notag\\
& =   \sum\limits_k \eta_k \|\mathbff{z}{t}{k}-\mathbf{w}_\ast\|^2_2 + \delta_t E_\mathcal{U}[ \sum\limits_k \eta_k \|\mathbff{u}{t}{k}\odot(\mathbff{z}{t}{k} -\overline{\mathbf{z}}_{t} )\|_2^2]\notag\\
& \quad  -\delta_t E_\mathcal{U}[\sum\limits_k \eta_k <2(\mathbff{z}{t}{k}-\mathbf{w}_\ast), \mathbff{u}{t}{k}\odot(\mathbff{z}{t}{k} -\overline{\mathbf{z}}_{t} )>] \notag\\
& =   \sum\limits_k \eta_k \|\mathbff{z}{t}{k}-\mathbf{w}_\ast\|^2_2 \notag\\
& \quad  + \delta_t E_\mathcal{U}[ \sum\limits_k \eta_k <\mathbff{u}{t}{k}\odot(\mathbff{z}{t}{k} -\overline{\mathbf{z}}_{t} )\notag\\
&\quad -  2(\mathbff{z}{t}{k}-\mathbf{w}_\ast), \mathbff{u}{t}{k}\odot(\mathbff{z}{t}{k} -\overline{\mathbf{z}}_{t} )>] \notag\\
& =   \sum\limits_k \eta_k \|\mathbff{z}{t}{k}-\mathbf{w}_\ast\|^2_2 \notag\\
& \quad + \delta_t E_\mathcal{U}[ \sum\limits_k \eta_k <\mathbff{u}{t}{k}\odot\mathbff{u}{t}{k}\odot(\mathbff{z}{t}{k} -\overline{\mathbf{z}}_{t} )\notag\\
&\quad -  2\mathbff{u}{t}{k}\odot(\mathbff{z}{t}{k}-\mathbf{w}_\ast),  \mathbff{z}{t}{k} -\overline{\mathbf{z}}_{t} >] \notag\\
& =   \sum\limits_k \eta_k \|\mathbff{z}{t}{k}-\mathbf{w}_\ast\|^2_2 \nonumber\\
& \quad + \delta_t  \sum\limits_k \eta_k <(p\epsilon^2+1-p)(\mathbff{z}{t}{k} -\overline{\mathbf{z}}_{t} )\notag\\
&\quad -  2(p\epsilon+1-p)(\mathbff{z}{t}{k}-\mathbf{w}_\ast),  \mathbff{z}{t}{k} -\overline{\mathbf{z}}_{t} > \notag\\
& \overset{(a)}{\leq}   \sum\limits_k \eta_k \|\mathbff{z}{t}{k}-\mathbf{w}_\ast\|^2_2 \notag\\
& \quad + \delta_t (p\epsilon+1-p) \sum\limits_k \eta_k (\|\mathbff{z}{t}{k} -\overline{\mathbf{z}}_{t}  -  (\mathbff{z}{t}{k}-\mathbf{w}_\ast)\|_2^2 \notag\\
&\quad - \|\mathbff{z}{t}{k} -\mathbf{w}_\ast\|_2^2) \notag\\
& =   \sum\limits_k \eta_k \|\mathbff{z}{t}{k}-\mathbf{w}_\ast\|^2_2 \nonumber\\
& \quad + \delta_t (p\epsilon+1-p)  (\|\overline{\mathbf{z}}_{t}  -  \mathbf{w}_\ast\|_2^2 - \sum\limits_k \eta_k \|\mathbff{z}{t}{k} -\mathbf{w}_\ast\|_2^2) \notag\\
& \overset{(b)}{\leq}   \sum\limits_k \eta_k \|\mathbff{z}{t}{k}-\mathbf{w}_\ast\|^2_2 \notag ,
\end{align}
where $E_\mathcal{U}[\mathbf{x}]$ is the expected value of $\mathbf{x}$ with respect to $\mathcal{U} = \{\mathbf{u}_t^{(k)}\}_{k}$, (a) is because $\epsilon \leq 1$ and $<\mathbf{a}-2\mathbf{b},\mathbf{a}> = \|\mathbf{a}-\mathbf{b}\|_2^2-\|\mathbf{b}\|_2^2$, and (b) is because $\|\overline{\mathbf{z}}_{t}  -  \mathbf{w}_\ast\|_2^2 = \|\sum\limits_k \eta_k  \mathbff{z}{t}{k}  -  \mathbf{w}_\ast\|_2^2 \leq \sum\limits_k \eta_k \|\mathbff{z}{t}{k} -\mathbf{w}_\ast\|_2^2$.

Taking expectation again, we have
\begin{align}
 E[\sum\limits_k \eta_k \|\mathbf{w}_t^{(k)}-\mathbf{w}_\ast\|_2^2] 
& \leq   E[\sum\limits_k \eta_k \|\mathbff{z}{t}{k}-\mathbf{w}_\ast\|^2_2]  \notag\\
& =   \sum\limits_k \eta_k E[\|\mathbff{w}{t-1}{k}-\mathbf{w}_\ast - \alpha_t\nabla F_k(\mathbff{w}{t-1}{k},A^{(i_k)}_t) \|^2_2]\notag\\
& =    \sum\limits_k \eta_k (E[\|\mathbff{w}{t-1}{k}-\mathbf{w}_\ast\|^2_2 \notag\\
&\quad - 2\alpha_t<\nabla F_k(\mathbff{w}{t-1}{k},A^{(i_k)}_t),\mathbff{w}{t-1}{k}-\mathbf{w}_\ast>]  \notag\\
& \quad  + \alpha_t^2E[\| \nabla F_k(\mathbff{w}{t-1}{k},A^{(i_k)}_t) \|_2^2]) \notag\\
& =    \sum\limits_k \eta_k(E[\|\mathbff{w}{t-1}{k}-\mathbf{w}_\ast\|^2_2 \notag\\
&  \quad - 2\alpha_t<E[\nabla F_k(\mathbff{w}{t-1}{k},A^{(i_k)}_t)|\mathbff{w}{t-1}{k}],\mathbff{w}{t-1}{k}-\mathbf{w}_\ast>]  \notag\\
& \quad + \alpha_t^2E[\| \nabla F_k(\mathbff{w}{t-1}{k},A^{(i_k)}_t) \|_2^2]) \notag\\
& \overset{(a)}{=}    \sum\limits_k \eta_k (E[\|\mathbff{w}{t-1}{k}-\mathbf{w}_\ast\|^2_2 \notag\\
&\quad - 2\alpha_t<\nabla F_k(\mathbff{w}{t-1}{k}),\mathbff{w}{t-1}{k}-\mathbf{w}_\ast>]  \notag\\
& \quad + \alpha_t^2E[\| \nabla F_k(\mathbff{w}{t-1}{k},A^{(i_k)}_t) \|_2^2]) \notag\\
& \overset{(b)}{\leq}    \sum\limits_k \eta_k ((1-\alpha_t\mu)E[\|\mathbff{w}{t-1}{k}-\mathbf{w}_\ast \|_2^2]   \notag\\
&\quad + \alpha_t^2E[\| \nabla F_k(\mathbff{w}{t-1}{k},A^{(i_k)}_t) \|_2^2])\notag\\
& \overset{(c)}{\leq}    (1-\alpha_t\mu) E[\sum\limits_k \eta_k \|\mathbff{w}{t-1}{k}-\mathbf{w}_\ast \|_2^2]   + \alpha_t^2 \sum_k\eta_k\sigma_k^2  ,
\label{fl:eq211}
\end{align}
where (a) is because $E[\nabla F_k(\mathbff{w}{t-1}{k},A^{(i_k)}_t)|\mathbff{w}{t-1}{k}] = \nabla F_k(\mathbff{w}{t-1}{k})$, (b) is due to \textbf{A4}, and (c) is due to \textbf{A5}.

Letting $\mathbf{e}_{t} =  E[\sum\limits_k \eta_k \|\mathbf{w}_t^{(k)}-\mathbf{w}_\ast\|_2^2]$, we will prove \eqref{fl:eq215} using induction on $t$. First, since $c \geq  E[\sum\limits_k \eta_k \|\mathbf{w}_0^{(k)}-\mathbf{w}_\ast\|_2^2]$, it is clear that \eqref{fl:eq215} holds true for $t = 0$. 

Now we assume the induction hypothesis that \eqref{fl:eq215} holds true for $t-1$ and check if it also holds true for the case of $t$. We have
\begin{eqnarray}
\mathbf{e}_t & \leq &  (1-\alpha_t\mu )  \mathbf{e}_{t-1}   +  \alpha_t^2 \sigma^2 \nonumber\\
& \leq &  (1 - \frac{\alpha_0\mu}{t+1}  )  \frac{c}{t} + \frac{\alpha_0^2\sigma^2}{(t+1)^2}\nonumber \\
& = & \frac{c}{t+1} \left(  (1 - \frac{\alpha_0\mu}{t+1}  )  \frac{t+1}{t} + \frac{\alpha_0^2\sigma^2}{c(t+1)}\right) \nonumber \\
& \overset{(a)}{\leq} & \frac{c}{t+1} \left( (1 - \frac{\alpha_0\mu}{t+1}  ) \frac{t+1}{t} + \frac{\alpha_0\mu-1}{t+1}\right) \nonumber \\
& = & \frac{c}{t+1} \frac{t(t+1)-(\alpha_0\mu - 1)}{t(t+1)} \nonumber \\
& \overset{(b)}{<} & \frac{c}{t+1}  \nonumber  ,
\end{eqnarray}
where $\sigma^2 = \sum_k\eta_k\sigma_k^2$, (a) is because $c\geq \frac{\alpha_0^2\sigma^2}{\alpha_0\mu-1}$ and (b) is because $\alpha_0 > \frac{1}{\mu}$, which is the desired result.

\end{proof}

\end{appendices}


\begin{thebibliography}{00}

\bibitem{localization:luongICASSP}
L.~T.~Nguyen and B.~Shim,
\newblock ``Gradual Federated Learning Using Simulated Annealing,''
\newblock in \emph{Proc. IEEE Int. Conf. Acoust. Speech Signal Process. (ICASSP)}, 2021.
  
\bibitem{mcmahan2017}
H.~B.~McMahan, E.~Moore, D.~Ramage, S.~Hampson, and B.~A.~y.~Arcas,
\newblock ``Communication-Efficient Learning of Deep Networks from Decentralized Data,''
\newblock in \textit{Proc. Int. Conf. Artificial Intell. Stat. (AISTATS)}, vol. 54, 2017, pp. 1273–82.


%
%


\bibitem{sahu2020}
T. Li, A. K. Sahu, A. Talwalkar, and V. Smith,
\newblock ``Federated learning: Challenges, methods, and future directions,''
\newblock \textit{IEEE Signal Process. Mag.}, vol. 37, no. 3, pp. 50–60, May 2020.

\bibitem{semiari2019}
M. Chen, O. Semiari, W. Saad, X. Liu, and C. Yin,
\newblock ``Federated echo state learning for minimizing breaks in presence in wireless virtual reality networks,''
\newblock \textit{IEEE Trans. Wireless Commun.}, vol. 19, no. 1, pp. 177–191, Jan. 2019.

\bibitem{nguyen2019tut}
L.~T.~Nguyen,~J.~Kim,~and~B.~Shim,
\newblock ``Low-Rank Matrix Completion: A Contemporary Survey,''
\newblock IEEE Access, vol. 7, no. 1, pp. 94215--94237, July 2019.

\bibitem{samarakoon2019}
S. Samarakoon, M. Bennis, W. Saad, and M. Debbah,
\newblock ``Distributed federated learning for ultra-reliable low-latency vehicular communications,''
\newblock \textit{IEEE Trans. Commun.}, vol. 68, no. 2, pp. 1146–1159, Feb. 2019.

\bibitem{yang2019}
Q. Yang, Y. Liu, T. Chen, and Y. Tong,
\newblock ``Federated machine learning: Concept and applications,''
\newblock \textit{ACM Trans. Intell. Syst. Technol.}, vol. 10, no. 2, pp. 1–19, Jan. 2019.

\bibitem{wang2019}
S. Wang, T. Tuor, T. Salonidis, K. K. Leung, C. Makaya, T. He, and K. Chan,
\newblock ``Adaptive federated learning in resource constrained edge computing systems,''
\newblock \textit{IEEE J. Sel. Areas Commun.}, vol. 37, no. 6, pp. 1205–1221, Jun. 2019.

\bibitem{sattler2019}
F. Sattler, S. Wiedemann, K.-R. Müller, and W. Samek,
\newblock ``Robust and communication-efficient federated learning from non-i.i.d. data,''
\newblock \textit{IEEE Trans. Neural Netw. Learn. Syst.}, vol. 31, no. 9, pp. 3400--3413, Sept. 2020.

\bibitem{wanghan2019}
X. Wang, Y. Han, C. Wang, Q. Zhao, X. Chen, and M. Chen,
\newblock ``In-edge AI: Intelligentizing mobile edge computing, caching and communication by federated learning,''
\newblock \textit{IEEE Netw. Mag.}, vol. 33, no. 5, pp. 156-165, Sept. 2019.


 
\bibitem{yuchen2012}
Y. Zhang, M. J. Wainwright, and J. C. Duchi,
\newblock ``Communication-efficient algorithms for statistical optimization,''
\newblock in \textit{Proc. Int. Conf. Neural Inform. Process. Syst.}, Lake Tahoe, NV, USA, Dec. 2012, pp. 1502–1510.

\bibitem{yossi2015}
Y. Arjevani and O. Shamir,
\newblock ``Communication complexity of distributed convex learning and optimization,''
\newblock in Proc. NIPS, pp. 1756–1764, Dec. 2015.

\bibitem{martin2010}
M. Zinkevich, M. Weimer, L. Li, and A. J. Smola,
\newblock ``Parallelized stochastic gradient descent,''
\newblock in Proc. NIPS, pp. 2595–2603, 2010.









\bibitem{kirkpatrick1983}
S. Kirkpatrick, C. D. Gelatt, and M. P. Vecchi,
\newblock ``Optimization by simulated annealing,''
\newblock Science, vol. 220, no. 4598, pp. 671–680, 1983.

\bibitem{eglese1990}
R. W. Eglese,
\newblock ``Simulated annealing: A tool for operation research,''
\newblock Eur. J. Oper. Res., vol. 46, no. 3, pp. 271–281, 1990.










\bibitem{locatelli2000}
M. Locatelli,
\newblock ``Simulated annealing algorithms for continuous global optimization: Convergence conditions,''
\newblock J. Optim. Theory Appl., vol. 104, no. 1, pp. 121–133, 2000.

\bibitem{chenzhu2011}
J. Chen, W. Zhu, and M. M. Ali,
\newblock ``A hybrid simulated annealing algorithm for nonslicing VLSI floorplanning,''
\newblock IEEE Trans. Syst., Man, Cybern. C, Appl. Rev., vol. 41, no. 4, pp. 544–553, Jul. 2011.

\bibitem{handong2019}
X. Han, Y. Dong, L. Yue, and Q. Xu,
\newblock ``State transition simulated annealing algorithm for discrete-continuous optimization problems,''
\newblock IEEE Access, vol. 7, pp. 44391–44403, 2019.



\bibitem{bottou2018}
L. Bottou, F. E. Curtis, and J. Nocedal,
\newblock ``Optimization methods for large-scale machine learning,''
\newblock \textit{Siam Review}, vol. 60, pp. 223–311, 2018.

\bibitem{li2020federated}
T.~Li, A.~K.~Sahu, M.~Zaheer, M.~Sanjabi, A.~Talwalkar, and V.~Smith,
\newblock ``Federated optimization in heterogeneous networks,''
\newblock in \textit{Proc. Conf. Mach. Learn. Syst.}, 2020.

\bibitem{jiang2018}
P. Jiang and G. Agrawal,
\newblock ``A linear speedup analysis of distributed deep learning with sparse and quantized communication,''
\newblock In Proc. 32nd Int. Conf. Neural Inform. Process. Syst., 2018, pp. 2530-2541.

\bibitem{yu2019}
H. Yu, R. Jin, and S. Yang, 
\newblock ``On the linear speedup analysis of communication efficient momentum SGD for distributed non-convex optimization,''
\newblock In Proc. Int. Conf. Mach. Learn., 2019, pp. 7184-7193.




\bibitem{gower2019}
R. M. Gower, N. Loizou, X. Qian, A. Sailanbayev, E. Shulgin, and P.
Richtarik,
\newblock ``SGD: General analysis and improved rates,''
\newblock in Proc. 36th Int. Conf. Mach. Learn., Long Beach, California, USA: PMLR, 09–15 Jun 2019, vol. 97, pp. 5200–5209.

\bibitem{necoara2019}
I. Necoara, Y. Nesterov, and F. Glineur,
\newblock ``Linear convergence of first order methods for nonstrongly convex optimization,''
\newblock Mathematical Programming, 175(1-2):69–107, 2019.






%







\bibitem{bagdasaryan2018}
E. Bagdasaryan, A. Veit, Y. Hua, D. Estrin, and V. Shmatikov,
\newblock ``How to backdoor federated learning,''
\newblock arXiv preprint arXiv:1807.00459, 2018.





\bibitem{lecunMNIST}
Y. LeCun. (1998).
\newblock The MNIST Database of Handwritten Digits. [Online]. Available: \url{http://yann.lecun.com/exdb/mnist/}

\bibitem{xiao2017}
H. Xiao, K. Rasul, and R. Vollgraf,
\newblock ``Fashion-MNIST: A novel image dataset for benchmarking machine learning algorithms,''
\newblock arXiv preprint arXiv:1708.07747.

\bibitem{krizhevsky2009}
A. Krizhevsky, V. Nair, and G. Hinton. (2014).
\newblock The CIFAR-10 dataset. [Online]. Available: \url{https://www.cs.toronto.edu/~kriz/cifar.html}


\bibitem{warden2017}
P. Warden, ``Launching the speech commands dataset,'' Google Research Blog, 2017.

\bibitem{lecun1998}
Y. LeCun, L. Bottou, Y. Bengio, and P. Haffner,
\newblock ``Gradient-based learning applied to document recognition,''
\newblock Proc. IEEE, vol. 86, no. 11, pp. 2278–2324, Nov. 1998.

\bibitem{simonyan2015}
K. Simonyan and A. Zisserman,
\newblock ``Very deep convolutional networks for large-scale image recognition,''
\newblock in Proc. Int. Conf. Learn. Representat., San Diego, CA, USA, pp. 1–14, 2015.

\bibitem{smith2017}
V. Smith, C. K. Chiang, M. Sanjabi, and A. S. Talwalkar,
\newblock ``Federated multi-task learning,''
\newblock in Proc. NIPS, pp. 4424-4434, 2017.


\bibitem{chensun2019}
Y. Chen, X. Sun, and Y. Jin,
\newblock ``Communication-Efficient Federated Deep Learning with Layerwise Asynchronous Model Update and Temporally Weighted Aggregation,''
\newblock \textit{IEEE Trans. Neural Net. Learn. Sys.}, vol. 31, no. 10, pp. 4229--4238, Oct. 2020.

\bibitem{yeganehfarshad2020}
Y. Yeganeh, A. Farshad, N. Navab, S. Albarqouni,
\newblock ``Inverse distance aggregation for federated learning with non-iid data,''
\newblock in Proc. DCL Workshop at MICCAI, 2020, pp. 150–159.

\bibitem{chenmonga2016}
J. Chen, R. Monga, S. Bengio, and R. Jozefowicz,
\newblock ``Revisiting distributed synchronous SGD,''
\newblock in Proc. ICLR  Workshop  Track, 2016. [Online]. Available: \url{https://arxiv.org/abs/1604.00981}.

\bibitem{wangyurochkin2020}
H.  Wang,  M.  Yurochkin,  Y.  Sun,  D.  Papailiopoulos,  and  Y.  Khazaeni,
\newblock ``Federated learning with matched averaging,''
\newblock in Proc. Int. Conf. Learn. Represent., 2020. [Online]. Available: \url{http://arxiv.org/abs/2002.06440}.

\bibitem{jisaravirta2021}
S. Ji, T. Saravirta, S. Pan, G. Long, and A. Walid,
\newblock ``Emerging trends infederated learning: From model fusion to federated X learning,''
\newblock arXivpreprint arXiv:2102.12920, 2021. [Online]. Available: \url{https://arxiv.org/abs/2102.12920}.


  

\bibitem{jipanlong2019}
S. Ji, S. Pan, G. Long, X. Li, J. Jiang, and Z. Huang,
\newblock ``Learning private neural  language  modeling  with  attentive  aggregation,''
\newblock in Proc.  IEEE Int.  Joint  Conf.  Neural  Netw.  (IJCNN),  Budapest,  Hungary,  Jul.  2019, pp. 1-8.

\bibitem{jiangjilong2020}
J. Jiang, S. Ji, G. Long,
\newblock ``Decentralized knowledge acquisition for mobile internet applications,''
\newblock World Wide Web (2020).

\bibitem{wuliangwang2020}
X. Wu, Z. Liang, and J. Wang,
\newblock ``FedMed: A federated learning framework for language modeling,''
\newblock \textit{Sensors}, vol. 20, no. 14, p. 4048, Jul. 2020.

\bibitem{huangchuzhou2021}
Y. Huang, L. Chu, Z. Zhou, L. Wang, J. Liu, J. Pei, Y. Zhang,
\newblock ``Personalized cross-silo federated learning on non-IID data,''
\newblock in \textit{Proc. Assoc. Adv. Artif. Intell.}, 2021, pp. 7865–7873.


\end{thebibliography}
\end{document}